\newtheorem{theorem}{Theorem} 
\newtheorem{proposition}{Proposition} 
\newtheorem{lemma}{Lemma}
\newtheorem{example}{Example}
\newcommand{\e}{\varepsilon}
\newcommand{\E}{\mathbf{E}}
\DeclareMathOperator{\argmin}{argmin}
\DeclareMathOperator{\polylog}{polylog}
\DeclareMathOperator{\tr}{tr}
\title{Single Trajectory Nonparametric Learning of Nonlinear Dynamics}
\date{}
\author[1]{Ingvar Ziemann}
\author[1]{Henrik Sandberg}
\author[2]{Nikolai Matni}
\affil[1]{Division of Decision and Control Systems,  KTH Royal Institute of Technology}
\affil[2]{Department of Electrical and Systems Engineering, University of Pennsylvania}
\begin{document}



\maketitle
\begin{abstract}
Given a single trajectory of a dynamical system, we analyze the performance of the nonparametric least squares estimator (LSE). More precisely, we give nonasymptotic expected $l^2$-distance bounds between the LSE and the true regression function, where expectation is evaluated on a fresh, counterfactual, trajectory. We leverage recently developed information-theoretic methods to establish the optimality of the LSE for nonparametric hypotheses classes  in terms of supremum norm metric entropy and a subgaussian parameter. Next, we relate this subgaussian parameter to the stability of the underlying process using notions from dynamical systems theory. When combined, these developments lead to rate-optimal error bounds that scale as $T^{-1/(2+q)}$ for suitably stable processes and hypothesis classes with metric entropy growth of order $\delta^{-q}$. Here,  $T$ is the length of the observed trajectory, $\delta \in \mathbb{R}_+$ is the packing granularity and $q\in (0,2)$ is a complexity term. Finally, we specialize our results to a number of scenarios of practical interest, such as Lipschitz dynamics, generalized linear models, and dynamics described by functions in certain classes of Reproducing Kernel Hilbert Spaces (RKHS).


\end{abstract}



\thispagestyle{plain}
\section{Introduction}

Consider a time-series model of the form
\begin{align}
\label{eq:ds}
y_{t} &= f_\star(x_t) +w_t,  \:\: \: t=0,\dots,T-1.
\end{align}
where $f_\star$ is unknown but belongs to some known function class $\mathbb{F}$. Suppose a learner is given access to samples $(x_0,\dots,x_{T-1},y_0,\dots,y_{T-1})$, corrupted by noise $(w_0,\dots,w_{T-1})$, from a single trajectory generated by model (\ref{eq:ds}). In this work, we show that the nonparametric least squares estimator (LSE) converges to the ground truth $f_\star$ at the minimax optimal rate. In the i.i.d. setting, in which each observation from the model (\ref{eq:ds}) is drawn independently at random, the optimal rate is $T^{-1/2}$ for parametric models.  In the nonparametric setting, this rate degrades gracefully to $T^{-1/(2+q)}$ for models with metric entropy scaling as $\delta^{-q}, q\in (0,2)$ \citep{tsybakov2009introduction}. In this paper, we show that these rates can be matched for a class of more general time-series models of the form \eqref{eq:ds}.   We note in particular that by setting $y_t = x_{t+1}$ in model (\ref{eq:ds}), we recover the nonlinear stochastic dynamical system
\begin{align}
\label{eq:ards}
x_{t+1}=f_\star(x_t)+w_t.
\end{align}

Estimation of models (\ref{eq:ds}) and (\ref{eq:ards}) remains relatively poorly understood when the data is not i.i.d., with existing results being limited to when the function $f_{\star}$ is known to belong to certain parametric classes. In terms of parameter recovery, the LSE converges at a rate of $T^{-1/2}$ for stable linear autoregressive systems $f_\star(x_t) = A_\star x_t$ \citep{simchowitz2018learning, sarkar2019near, jedra2020finite}. The same rate can also be achieved for linear systems with more general input-output behavior \citep{oymak2019non, tsiamis2019finite}. Moving to nonlinear models, recursive and gradient type algorithms can be shown to converge at a rate of $T^{-1/2}$ for the generalized linear model  $f_\star(x_t) = \phi(A_\star x_t)$, where $\phi$ is a known Lipschitz link function \citep{foster2020learning, sattar2020non, jain2021near}.  In this paper, we significantly generalize these results and provide rate-optimal error bounds for nonparametric function classes in terms of their metric entropies. Our approach leverages recently developed information-theoretic tools \citep{russo2019much, NIPS2017_ad71c82b} and the notion of offset complexity \citep{rakhlin2014online, liang2015learning}.

\paragraph{Problem Formulation}

The dynamics (\ref{eq:ds}) evolve on two subsets of Euclidean space: $\mathsf{X} \subset \mathbb{R}^{d_x}$ with $x_t \in \mathsf{X}$ and $\mathsf{Y}\subset \mathbb{R}^{d_y}$ with $y_t \in \mathsf{Y}$. We assume that there exists an increasing sequence of $\sigma$-fields $\{\mathcal{F}_t\}_{t\in\mathbb{Z}_{\geq -1}}$ such that each $x_t$ is $\mathcal{F}_{t-1}$-measurable, each $w_t$ is $\mathcal{F}_t$-measurable and $\E[w_t| \mathcal{F}_{t-1}]=0$. In other words, $\{w_t\}_{t\in\mathbb{Z}_{\geq 0}}$ is a martingale difference sequence with respect to the filtration $\{\mathcal{F}_t\}_{t\in\mathbb{Z}_{\geq 0}}$ and $\{x_t\}_{t\in\mathbb{Z}_{\geq 0}}$ is adapted to $\{\mathcal{F}_{t-1}\}_{t\in \mathbb{Z}_{\geq -1}}$. Furthermore, we assume that each $w_t$ is conditionally $\sigma^2_w$-subgaussian given $\mathcal{F}_{t-1}$. We denote  by ${P}_\star$ the joint distribution of $Z=(x_0,\dots,x_{T-1},y_0,\dots,y_{T-1})$, with $x_t$ and $y_t$ as in system (\ref{eq:ds}). We assume that $f_\star$ is unknown but that it belongs to a known metric space $(\mathbb{F},\rho)$ with $\rho(f,g)=\sup_{x\in \mathsf{X}} \|f(x)-g(x)\|_2$. See Section~\ref{sec:prelnot} for further preliminaries.

Given this, the learning task is to produce an estimate $\hat f$ of the model $f_\star$, which is evaluated in terms of the expected Euclidean $2$-norm error:
\begin{align}
\label{eq:defL}
 \E \|\hat f(\xi)-f_\star(\xi)\|_2.
\end{align}
The expectation (\ref{eq:defL}) is computed with respect to the randomness in the algorithm and the random variable $\xi\sim \nu$ which is independent of all other randomness; here $\nu$ is the uniform mixture over $(x_0,\dots,x_{T-1})$. That is, $\xi$ has the same distribution as random variable $x_\tau$, where the index $\tau$ is drawn uniformly at random over $\{0,\dots,T-1\}$ and is independent of $(x_0,\dots,x_{T-1})$. If the process (\ref{eq:ds}) is stationary with invariant measure $\nu$, this reduces to the assumption that $\xi$ is drawn from the invariant measure of the process. 

In the sequel, we analyze the nonparametric least squares estimator (LSE) $\hat f$ of $f_\star$, given below. Namely, we assume that the learner can compute
\begin{align}
\label{df:PEM}
\hat f \in \argmin_{f\in \mathbb{F}}\left\{ \frac{1}{T} \sum_{t=0}^{T-1} \|y_t-f(x_t)\|_2^2  \right\}.
\end{align}

\subsection{Contributions}

We derive error bounds for learning nonlinear dynamics  (\ref{eq:ards}) and the more general time-series model (\ref{eq:ds}).  Theorem~\ref{thm:ezpzthm} provides  bounds on $\E \|\hat f(\xi)-f_\star(\xi)\|_2$ for the LSE (\ref{df:PEM}). These bounds depend only on the metric entropy of the hypothesis class, $\log \mathcal{N}(\mathbb{F},\|\cdot\|_\infty,\delta)$, the noise level $\sigma_w^2$, and a further variance proxy $\sigma_T^2(\mathbb{F},P_\star)$, measuring the spatiotemporal spread of the covariates relative to the function class $\mathbb{F}$. Informally, our main result, Theorem~\ref{thm:ezpzthm}, states that
\begin{multline}
\label{informalinequality}
    \E \| \hat f(\xi)- f_\star(\xi) \|_2 \lesssim \left(\frac{\textnormal{dimensional factors} \times \textnormal{measurement noise}}{\textnormal{trajectory length} }\right)^{1/(2+\textnormal{complexity term})}\\
    +\left(\frac{\textnormal{dimensional factors} \times \textnormal{covariates spread}}{\textnormal{trajectory length} }\right)^{1/(2+\textnormal{complexity term})}
\end{multline}
where the dimensional factors and the complexity term depends on the scaling of the metric entropy $\log \mathcal{N}(\mathbb{F},\|\cdot\|_\infty,\delta)$ for small $\delta>0$. Our bounds exhibit optimal scaling in terms of interaction between trajectory length and function class complexity in that they agree with  known minimax optimal rates for the i.i.d. setting \citep{tsybakov2009introduction}. To the best of our knowledge, these are the first such bounds that are applicable to large nonparametric function classes in the temporally correlated (non-i.i.d.) setting.

Arriving  at bounds of the form (\ref{informalinequality}) for temporally correlated data is challenging, as the symmetrization technique typically used to analyze both generalization and training error for regression cannot be applied.  Instead, we leverage information-theoretic decoupling arguments introduced by \cite{russo2019much} and \cite{NIPS2017_ad71c82b} to reduce the analysis of the error (\ref{eq:defL}) to bounding an in-sample prediction error (training error) and a term measuring the dependence between the sample and the algorithm (generalization error). We analyze the first term using the offset basic inequality \citep{rakhlin2014online, liang2015learning}. Crucially, this leads to a simplified localization argument that is amenable to modification for correlated data and allows us to obtain fast rates. The second term of the decoupling estimate is bounded by the mutual information between the algorithm and the sample, which we control via metric entropy and discretization. Moreover, the scale of this term is controlled by the spatiotemporal variance proxy, $\sigma_T^2(\mathbb{F},P_\star)$. 

In the information-theoretic generalization bounds literature \citep{NIPS2017_ad71c82b}, the term $\sigma_T^2(\mathbb{F},P_\star)$ is referred to as the subgaussian parameter of the loss function. Here, given the form (\ref{eq:ds}) of the data generating process and our choice of performance metric, the variance proxy $\sigma_T^2(\mathbb{F},P_\star)$ admits a more direct interpretation in terms of the stability of the process (\ref{eq:ards}). Namely, by an explicit stability argument we show that $\sigma_T^2(\mathbb{F},P_\star) \lesssim 1 /((1-L_\star)^2T)$  whenever the autoregressive system (\ref{eq:ards}) is $L_\star$-contractive (Proposition~\ref{prop:cig}). We also show that $\sigma_T^2(\mathbb{F},P_\star) \lesssim 1/T$ holds more generally whenever the covariates of the process  (\ref{eq:ds}) form a Markov chain with finite mixing time (Proposition~\ref{prop:mig}). Given the recent line of work emphasizing the role of control-theoretic stability for learning in dynamical systems \citep{foster2020learning, boffi2021regret, tu2021sample}, we believe that this is an attractive construction that may have wider applicability\footnote{See Appendix~\ref{subsec:genbounds} for a discussion on how our results apply to generalization bounds for Lipschitz losses.}.

\subsection{Further Related Work}

Estimation of models of the form (\ref{eq:ds}) and (\ref{eq:ards}) has a rich history in statistics and system identification \citep{lennart1999system}. Preceding the recent body of work mentioned in the introduction, asymptotically optimal rates for linear stochastic models have been available for some time \citep{mann1943statistical, lai1982least}. Similarly, there is a well-established theory of rate-optimal identification for nonlinear parametric models under various identifiability-type conditions, both in the i.i.d. setting \citep{van2000asymptotic} and under more general assumptions \citep{le2012asymptotic}.

Perhaps the main motivator for the recent line of work emphasizing nonasymptotic estimation bounds is that these bounds are applicable downstream in control and reinforcement learning pipelines. Regression estimates for linear stochastic systems have been key to understanding both online and offline reinforcement learning in the linear quadratic regulator \citep{dean2020sample, mania2019certainty} and can be shown to lead to optimal regret rates \citep{simchowitz2020naive, ziemann2022regret}. Extending our understanding of the interaction between learning and control beyond linear-in-the-parameters models \citep{kakade2020information, boffi2021regret, lale2021model} inevitably requires new analyses of learning in dynamical systems. This also motivates the present work in that we provide nonasymptotic and counterfactual control of the LSE's estimation error for more general nonlinear and nonparametric models. 

Another related field is that of general statistical learning for dependent data, see \cite{agarwal2012generalization}, \cite{kuznetsov2017generalization} and the references therein. These works provide generalization bounds for general loss functions and  $\beta$-mixing processes $(x_t,y_t)$. The assumption of $\beta$-mixing processes has also previously been exploited in parametric identification by \cite{vidyasagar2006learning}.  By contrast, our emphasis on regression over general learning is motivated by downstream applications in learning-enabled control, where one first learns a model used to design a controller. Necessarily then, this work builds on a rich line of work in nonparametric regression for the i.i.d. setting, see chapters 13 and 14 of \cite{wainwright2019high} and the references therein. Although there has been some work on the dependent setting, the error in this line of work is typically computed with respect to the design points which is not suitable for the counterfactual reasoning that is key to control \citep{baraud2001adaptive}. 

We also draw inspiration from the recent line of work on information-theoretic generalization bounds \citep{NIPS2017_ad71c82b, russo2019much, bu2020tightening}. There are interesting  refinements and variations of this theory using for instance conditional mutual information \citep{steinke2020reasoning}, or Wasserstein distance \citep{galvez2021tighter}. However, these more recent bounds rely more explicitly on the  tenzorization properties of information measures under i.i.d. data than the earlier work of \cite{russo2019much} and \cite{NIPS2017_ad71c82b}, and so are not directly amenable to the single trajectory setting. We also note that information-theoretic generalization bounds have previously found other applications, such as in the analysis of  stochastic gradient descent \citep{pmlr-v134-neu21a}.

\subsection{Preliminaries and Notation}
\label{sec:prelnot}
All logarithms used in this paper are base $e$. For two probability measures $\mathbf{P}$ and $\mathbf{Q}$ we denote by $D(\mathbf{P}\|\mathbf{Q}) =\int \log \frac{d\mathbf{P}}{d\mathbf{Q}} d\mathbf{P} $ their Kullback-Leibler divergence and their total variation distance by $d_{TV}(\mathbf{P},\mathbf{Q}) = \frac{1}{2}\int |d\mathbf{P}-d\mathbf{Q}|$. For a random variable $X$ we denote its law by $P_X$, that is $X\sim P_X$. For two random variables $X$ and $Y$, $(X,Y) \sim P_{X,Y}$, we denote by $I(X;Y) = D(P_{X,Y} \| P_{X \otimes Y})$ their mutual information, where $P_{X \otimes Y}=P_X \otimes P_Y$ denotes the product measure of the marginal distributions of $X$ and $Y$. If $X$ is a random variable over a finite alphabet $\{1,\dots,M\}$, we denote by $H(X)$ its Shannon entropy which is given by $H(X) =- \sum_{j=1}^M [\log P_X(j) ]P_X(j)$. Generic expectation (integration with respect to all randomness) is denoted by $\mathbf{E}$. A random variable $X$ taking values in $\mathbb{R}^d$ is said to be $\sigma^2$-subgaussian if $\E \exp \lambda \langle X-\E X,s\rangle  \leq \exp \lambda^2 \sigma^2/2$ for all $s \in \mathbb{S}^{d-1}$, where $ \mathbb{S}^{d-1}\subset \mathbb{R}^d$ is the unit sphere and $\langle \cdot, \cdot \rangle$ is the standard Euclidean inner product. This  extends to conditional subgaussianity via conditional expectation, $\E[ \cdot | \mathcal{F}]$, with respect to a $\sigma$-field $\mathcal{F}$, if the same holds with expectation $\E$ exchanged for $\E[ \cdot | \mathcal{F}]$.

Let $(\mathbb{F},\rho)$ be a metric space. We define its $\delta$-covering number $\mathcal{N}(\mathbb{F},\rho,\delta)$ as the cardinality of the smallest $\delta$-cover of $\mathbb{F}$ in the metric $\rho$. In this case we say that $\mathbb{F}$  has metric entropy $\log \mathcal{N}(\mathbb{F},\rho,\delta)$ where $\mathcal{N}(\mathbb{F},\rho,\delta)$ is the $\delta$-covering number of $(\mathbb{F},\rho)$. If no such covering exists we write $\log \mathcal{N}(\mathbb{F},\rho,\delta)=\infty$.  Recall that $P_\star$ denotes the distribution of $Z = (x_0,\dots,x_{T-1},y_0,\dots y_{T-1})$ under the dynamics (\ref{eq:ds}). If
\begin{align*}
\sigma_T^2(\mathbb{F},P_\star) \triangleq \inf\left\{ \sigma^2 : \frac{1}{T} \sum_{t=0}^{T-1} \| f(x_t)-g(x_t) \|_2 \textnormal{ is $\sigma^2$-subgaussian under ${P}_\star$} \textnormal{ for all } f,g \in \mathbb{F} \right\}
\end{align*}
is finite, we say that the space $\mathbb{F}$ is $\sigma_T^2(\mathbb{F},P_\star)$-subgaussian with respect to the dynamics (\ref{eq:ds}). We shall make the assumption that $\mathbb{F}$ is $\sigma_T^2(\mathbb{F},P_\star)$-subgaussian. Importantly, this implies that all the centered functions $f-f_\star$ are subgaussian.  Observe that this a property  defined both in terms of the space $\mathbb{F}$ and the system (\ref{eq:ds}).  As the variable $f-f_\star$ will appear frequently throughout the text,  it will be convenient to define $\mathbb{F}_\star \triangleq \mathbb{F}-f_\star$. Further, it will be useful for purposes of analysis to quantize the estimate $\hat f$ given by the LSE (\ref{df:PEM}). For $\mathbb{F}_\delta=\{f_1,\dots, f_M\}$ an optimal $\delta$-covering of $\mathbb{F}$, we define $\hat f_\delta$ to be a quantization of the LSE as follows
\begin{align}
\label{df:PEMdisc}
\hat f_\delta \in \argmin_{f\in \mathbb{F}_\delta} \rho(f,\hat f).
\end{align}

The following shorthand notation will also be used to ease the exposition: we write $a_t \lesssim b_t$ if there exists a universal constant $C$ such that $a_t \leq C b_t$ for every $t \geq t_0$ and some $t_0 \in \mathbb{N}$. If $a_t \lesssim b_t$ and $b_t \lesssim a_t$ we write $a_t \asymp b_t$. The same convention applies for functions of $\delta$, the parameter of metric entropy, instead of $t$, but in the small $\delta$ regime (typically $\delta$ will be in inverse proportion to some increasing function of $t$).

\section{Results}
\label{sec:mainsec}
Our main result is an error bound that controls the distance between the estimate $\hat f$, defined by the nonparametric LSE (\ref{df:PEM}), and the ground truth $f_\star$ in terms of a fresh sample $\xi$ drawn independently of the algorithm  from the mixture distribution over the samples $(x_0,\dots,x_{T-1})$. 

\begin{theorem}
\label{thm:ezpzthm}
Fix a metric space $(\mathbb{F},\rho)$ with $\rho(f,g) = \sup_{x\in \mathsf{X}} \|f(x) -g(x)\|_2$  and assume that $f_\star \in \mathbb{F}$. Then for any $\delta>0, \gamma>0$ and $\alpha \in [0,\gamma]$, the LSE (\ref{df:PEM}), satisfies
\begin{equation}
\begin{aligned}
\label{eq:metentthm}
\E \|\hat f(\xi)-f_\star(\xi)\|_2  &\leq\sqrt{\frac{8\sigma^2_w\log \mathcal{N}(\mathbb{F},\rho,\gamma)}{T}+128\alpha\sigma_w\sqrt{d_y}+64\frac{\sigma_w } {\sqrt{T}} \int_\alpha^\gamma \sqrt{\log \mathcal{N}(\mathbb{F},\rho,s)}ds  }\\&+3\delta +\sqrt{2\sigma_T^2(\mathbb{F},P_\star)\log \mathcal{N}(\mathbb{F},\rho,\delta)}.
\end{aligned}
\end{equation}
\end{theorem}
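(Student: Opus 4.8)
The plan is to decompose the counterfactual error $\E\|\hat f(\xi)-f_\star(\xi)\|_2$ into an \emph{in-sample} (training) part and a \emph{generalization} part, the latter handled by an information-theoretic decoupling argument rather than symmetrization (which is unavailable for dependent data). First I would pass to the quantized estimator $\hat f_\delta$ of \eqref{df:PEMdisc}, which takes values in the finite net $\mathbb{F}_\delta=\{f_1,\dots,f_M\}$ with $M=\mathcal N(\mathbb{F},\rho,\delta)$. Since $\rho(\hat f,\hat f_\delta)\le\delta$, replacing $\hat f$ by $\hat f_\delta$ costs at most $\delta$ both when evaluated at $\xi$ and along the trajectory, which accounts for the additive $O(\delta)$ terms in \eqref{eq:metentthm}. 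It then suffices to bound $\E\|\hat f_\delta(\xi)-f_\star(\xi)\|_2$.

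For the generalization part I would apply the mutual-information bound of \cite{russo2019much, NIPS2017_ad71c82b} to the loss $\ell(f,Z)=\frac1T\sum_{t=0}^{T-1}\|f(x_t)-f_\star(x_t)\|_2$. By the definition of $\sigma_T^2(\mathbb{F},P_\star)$ (taking $g=f_\star\in\mathbb{F}$), this loss is $\sigma_T^2(\mathbb{F},P_\star)$-subgaussian under $P_\star$ for each fixed $f$, so decoupling yields
\[
\E\|\hat f_\delta(\xi)-f_\star(\xi)\|_2 \;\le\; \E\!\left[\tfrac1T\textstyle\sum_{t=0}^{T-1}\|\hat f_\delta(x_t)-f_\star(x_t)\|_2\right] + \sqrt{2\,\sigma_T^2(\mathbb{F},P_\star)\,I(\hat f_\delta;Z)},
\]
using that a fresh, independent trajectory evaluates $\ell$ precisely at $\xi\sim\nu$. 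Since $\hat f_\delta$ is supported on $M$ points, $I(\hat f_\delta;Z)\le H(\hat f_\delta)\le\log\mathcal N(\mathbb{F},\rho,\delta)$, which produces the final term of \eqref{eq:metentthm}.

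It remains to control the in-sample error. Here I would use the basic inequality for the LSE: expanding $y_t=f_\star(x_t)+w_t$ in $\frac1T\sum_t\|y_t-\hat f(x_t)\|_2^2\le\frac1T\sum_t\|y_t-f_\star(x_t)\|_2^2$ gives, with $\hat g=\hat f-f_\star\in\mathbb{F}_\star$,
\[
\tfrac1T\textstyle\sum_{t}\|\hat g(x_t)\|_2^2 \;\le\; \tfrac2T\textstyle\sum_{t}\langle w_t,\hat g(x_t)\rangle.
\]
Localizing by subtracting half the quadratic term bounds the left side by twice the \emph{offset complexity} $\sup_{g\in\mathbb{F}_\star}\bigl(\frac2T\sum_t\langle w_t,g(x_t)\rangle-\frac{1}{2T}\sum_t\|g(x_t)\|_2^2\bigr)$. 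Because $x_t$ is $\mathcal F_{t-1}$-measurable while $w_t$ is conditionally $\sigma_w^2$-subgaussian, the martingale $\sum_t\langle w_t,g(x_t)\rangle$ has conditional variance proxy $\sigma_w^2\sum_t\|g(x_t)\|_2^2$ — exactly the quantity the offset penalizes, which supplies automatic localization. A chaining argument over $(\mathbb{F}_\star,\rho)$ then yields the three terms under the square root: the coarsest net at scale $\gamma$ contributes $\frac{8\sigma_w^2\log\mathcal N(\mathbb{F},\rho,\gamma)}{T}$ (the offset absorbing the variance at each center via a finite maximal inequality), the chaining increments from $\alpha$ to $\gamma$ contribute the Dudley integral $\frac{64\sigma_w}{\sqrt T}\int_\alpha^\gamma\sqrt{\log\mathcal N(\mathbb{F},\rho,s)}\,ds$, and truncating the chain at scale $\alpha$ leaves the residual $128\alpha\sigma_w\sqrt{d_y}$ (the $\sqrt{d_y}$ arising from the $d_y$-dimensional increments $\langle w_t,\cdot\rangle$). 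Converting this in-sample $\ell^2$ bound to the $\ell^1$ form needed above via Jensen/Cauchy–Schwarz and using $\E\sqrt{\,\cdot\,}\le\sqrt{\E\,\cdot\,}$ places the whole expression under the square root in \eqref{eq:metentthm}.

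The hard part will be executing this chaining in the dependent setting without symmetrization: unlike the i.i.d.\ case, the variance proxy $\sigma_w^2\frac1T\sum_t\|g(x_t)\|_2^2$ appearing in the subgaussian tail is itself random and data-dependent, so I expect the main technical work to be a self-normalized martingale / peeling argument over the empirical-norm levels that keeps the offset penalty and the conditional variance aligned throughout the chain. By contrast, bounding the mutual information by the metric entropy at scale $\delta$ and tracking the interplay of the discretization scales $\delta$ and $(\alpha,\gamma)$ is comparatively routine once the offset chaining is established.
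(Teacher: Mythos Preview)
Your proposal is correct and follows the same route as the paper: quantize to $\hat f_\delta$, apply the information-theoretic decoupling to get an in-sample term plus $\sqrt{2\sigma_T^2(\mathbb{F},P_\star)\,I(\hat f_\delta;Z)}$, bound the mutual information by $\log\mathcal N(\mathbb{F},\rho,\delta)$, and control the in-sample term via the offset basic inequality and chaining. (One cosmetic difference: the paper's Proposition~\ref{genpredprop} already outputs the $\sqrt{\ell^2}$ in-sample term rather than the $\ell^1$ average you write, absorbing your Jensen step into the decoupling lemma itself.)

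Your assessment of the ``hard part'' is, however, too pessimistic. No peeling over empirical-norm levels is needed, because the offset and the random conditional variance are already aligned \emph{termwise}: by the tower property and conditional subgaussianity, $\E\exp\bigl(\lambda M_T(f)\bigr)\le 1$ for every $f\in\mathbb{F}_\star$ and every $\lambda\in[0,1/(2\sigma_w^2)]$, uniformly in $T$ and regardless of the (random) value of $\sum_t\|f(x_t)\|_2^2$. This one-line computation (Lemma~\ref{lemma:MTscale}) immediately yields the finite-class bound $\E\sup_{f\in S}M_T(f)\le 2\sigma_w^2\log|S|$ at unit scale, which gives the $\frac{8\sigma_w^2\log\mathcal N(\mathbb{F},\rho,\gamma)}{T}$ term after a single discretization at level $\gamma$. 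The remaining chain from $\alpha$ to $\gamma$ uses only the crude bound $\|f\|_\infty\le\gamma$ and standard subgaussian chaining, not self-normalization; so the execution is considerably simpler than the self-normalized martingale/peeling machinery you anticipate.
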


To establish inequality (\ref{eq:metentthm}), we rely on an information-theoretic decoupling argument given in Proposition~\ref{genpredprop}. Informally,  Proposition~\ref{genpredprop} allows us to decompose the error as
\begin{align*}
    \E \|\hat f(\xi)-f_\star(\xi)\|_2 \lesssim \textnormal{training error}(\gamma,\alpha,T) + \textnormal{generalization error}(\delta,\sigma_T^2(\mathbb{F},P_\star)).
\end{align*}
The $\textnormal{training error}$ term above is controlled by the offset basic inequality (Lemma~\ref{lemma:basicineq}). Discretizing and proceeding through chaining yields a maximal inequality with $\alpha$ and $\gamma$ as trade-off parameters. The $\textnormal{generalization error}$ term above is defined in terms of the discretization parameter $\delta$, which controls the discrepancy between the quantized model $\hat f_\delta$ (defined in (\ref{df:PEMdisc})) and the LSE $\hat f$. The quantized model $\hat f_\delta$, used solely in the proof, ``generalizes well'' since it belongs to a finite hypothesis class by construction.

While we typically set $\alpha=0$ in Theorem~\ref{thm:ezpzthm}, the optimal choices of $\delta$ and $\gamma$ depend on a critical balance: to arrive at an optimal bound we must balance the complexity of the hypothesis class, $\mathbb{F}$, through its metric entropy, with statistical properties of the model (\ref{eq:ds}), such as the sampling length $T$, the noise amplitude $\sigma_w^2$ and the variance proxy $\sigma_T^2(\mathbb{F},P_\star)$.  The full proof of Theorem~\ref{thm:ezpzthm} can be found in Appendix~\ref{sec:mainproofsec} and a more detailed outline is given in Section~\ref{sec:proofoutline}.

To make the consequences of Theorem~\ref{thm:ezpzthm} more explicit, we consider two different complexity regimes for $\mathbb{F}$. If there exist $p,q\in \mathbb{R}_+$ such that
\begin{align}
\label{eq:nonparametricregime}
\log \mathcal{N}(\mathbb{F},\|\cdot\|_\infty,\delta) \lesssim p \left(\frac{1}{\delta}\right)^q
\end{align}
we are in the nonparametric regime. If instead there exist $p,c\in \mathbb{R}_+$ such that \begin{align}
\label{eq:parametricregime}
\log \mathcal{N}(\mathbb{F},\|\cdot\|_\infty,\delta) \lesssim p \log \left(1+\frac{c}{\delta}\right)
\end{align}
we are in the parametric regime. Concrete examples of processes satisfying conditions (\ref{eq:nonparametricregime}) or (\ref{eq:parametricregime}) are given in Section~\ref{sec:appsec}. Under the hypothesis (\ref{eq:nonparametricregime})  we may solve for the critical radii $\gamma \asymp \sqrt{\frac{\log \mathcal{N}(\mathbb{F},\rho,\gamma)}{T}} $ and $ \delta \asymp \sqrt{\frac{\log \mathcal{N}(\mathbb{F},\rho,\delta)}{\sigma_T^2(\mathbb{F},P_\star)}}$. This leads to the following result.
\begin{theorem}[Nonparametric Rates]
\label{thm:nonparametricrates}
Under the hypotheses of Theorem~\ref{thm:ezpzthm} and if further inequality (\ref{eq:nonparametricregime}) holds for  $p,q \in \mathbb{R}_+$ with $q<2$, the least squares estimator (\ref{df:PEM}) satisfies
\begin{align}
\label{eq:nonparathm}
\E\|\hat f(\xi)-f_\star(\xi)\|_2 \lesssim \sqrt{\frac{1}{2-q}}\left(\frac{p\sigma^2_w }{T}\right)^{\frac{1}{2+q}}+ \left(p\sigma_T^2(\mathbb{F},P_\star)\right)^{\frac{1}{2+q}}.
\end{align}
\end{theorem}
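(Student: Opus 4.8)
The plan is to derive Theorem~\ref{thm:nonparametricrates} as a direct corollary of Theorem~\ref{thm:ezpzthm} by choosing the free parameters $\gamma,\alpha,\delta$ to balance the competing terms, then bounding the resulting expression using the polynomial metric entropy hypothesis~\eqref{eq:nonparametricregime}. First I would set $\alpha=0$, as the theorem statement already anticipates, so that the middle term $128\alpha\sigma_w\sqrt{d_y}$ vanishes and the chaining integral runs from $0$ to $\gamma$. Substituting the entropy bound $\log\mathcal{N}(\mathbb{F},\rho,s)\lesssim p\,s^{-q}$ into the integral $\int_0^\gamma\sqrt{p}\,s^{-q/2}\,ds$, the condition $q<2$ is exactly what guarantees convergence of this integral at the lower endpoint, yielding a value of order $\sqrt{p}\,\gamma^{1-q/2}/(1-q/2)$. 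This explains the $\sqrt{1/(2-q)}$ prefactor in the conclusion and pins down where the restriction $q<2$ is used.

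Next I would solve the two fixed-point (critical radius) equations flagged in the text. For the training-error radius, I set $\gamma\asymp\sqrt{\log\mathcal{N}(\mathbb{F},\rho,\gamma)/T}$; inserting $\log\mathcal{N}\asymp p\gamma^{-q}$ gives $\gamma^2\asymp p\gamma^{-q}/T$, so $\gamma^{2+q}\asymp p/T$ and hence $\gamma\asymp(p/T)^{1/(2+q)}$. The leading term $\sqrt{8\sigma_w^2\log\mathcal{N}(\mathbb{F},\rho,\gamma)/T}$ then becomes, up to constants, $\sigma_w\gamma\asymp\sigma_w(p/T)^{1/(2+q)}$; combined with the chaining integral evaluated at this $\gamma$, which is of the same order, the whole square-root block collapses to something of order $\sqrt{1/(2-q)}\,(p\sigma_w^2/T)^{1/(2+q)}$ after absorbing $\sigma_w$ appropriately into the rate. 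Similarly, for the generalization-error radius I set $\delta\asymp\sqrt{\log\mathcal{N}(\mathbb{F},\rho,\delta)/\sigma_T^2(\mathbb{F},P_\star)^{-1}}$ — more precisely balancing $\delta$ against $\sqrt{2\sigma_T^2\log\mathcal{N}(\mathbb{F},\rho,\delta)}$ — which after inserting the entropy bound gives $\delta^{2+q}\asymp p\,\sigma_T^2$, so that both $3\delta$ and the final square-root term are of order $(p\,\sigma_T^2(\mathbb{F},P_\star))^{1/(2+q)}$.

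The remaining work is bookkeeping: collecting the two balanced contributions, verifying that the constants fold into the $\lesssim$ notation, and confirming that the square root of a sum is bounded by the sum of square roots so that the two additive terms in~\eqref{eq:nonparathm} emerge cleanly. The main obstacle I anticipate is making the fixed-point argument fully rigorous rather than merely heuristic: the relations $\gamma\asymp\sqrt{\log\mathcal{N}(\mathbb{F},\rho,\gamma)/T}$ are implicit equations, and one must check that a solution of the claimed order $(p/T)^{1/(2+q)}$ exists and that evaluating the entropy at this $\gamma$ reproduces the asserted scaling self-consistently. A related subtlety is ensuring the chaining integral and the endpoint term $\sqrt{\log\mathcal{N}(\mathbb{F},\rho,\gamma)/T}$ are genuinely of the same order at the critical radius, so that neither dominates and the stated rate $T^{-1/(2+q)}$ is tight; this is precisely the standard critical-radius balancing from i.i.d.\ nonparametric regression \citep{wainwright2019high}, which transfers here because Theorem~\ref{thm:ezpzthm} has already isolated the temporal-dependence effects into the single term $\sigma_T^2(\mathbb{F},P_\star)$.
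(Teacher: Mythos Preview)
Your proposal is correct and follows essentially the same route as the paper's proof: set $\alpha=0$, plug the polynomial entropy bound into Theorem~\ref{thm:ezpzthm}, evaluate the Dudley integral (which converges precisely because $q<2$, producing the $\sqrt{1/(2-q)}$ factor), and then balance the two pairs of terms by choosing $\gamma\asymp(p\sigma_w^2/T)^{1/(2+q)}$ and $\delta\asymp(p\,\sigma_T^2(\mathbb{F},P_\star))^{1/(2+q)}$. The only cosmetic difference is that the paper folds $\sigma_w^2$ into the choice of $\gamma$ from the outset rather than absorbing it afterward, and it does not dwell on the existence of the fixed point --- it simply substitutes the closed-form critical radius and verifies the claimed scaling directly.
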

A similar statement holds for $q\geq 2$ and can be found following the proof of Theorem~\ref{thm:nonparametricrates} in Appendix~\ref{subsec:thmnonpara}. We also have a version of the above theorem, proven in Appendix~\ref{subsec:thmpara}, applicable to the parametric entropy growth regime. 
\begin{theorem}[Parametric Rates]
Under the hypotheses of Theorem~\ref{thm:ezpzthm} and if further inequality (\ref{eq:parametricregime}) holds for  $p,c \in \mathbb{R}_+$, the least squares estimator (\ref{df:PEM}) satisfies
\label{thm:parametricrates}
\begin{align*}
\E \|\hat f(\xi)-f_\star(\xi)\|_2 &\lesssim \sqrt{\frac{\sigma^2_w p \log (1 + c\sqrt{d_y} \sigma_w T^2 )}{T}} +\sqrt{\sigma_T^2(\mathbb{F},P_\star)p\log (1 + c T )}+\frac{1}{T}.
\end{align*}
\end{theorem}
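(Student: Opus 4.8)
The plan is to specialize the master bound of Theorem~\ref{thm:ezpzthm} to the parametric entropy growth (\ref{eq:parametricregime}) and then optimize the three free parameters $\alpha,\gamma,\delta$. Write $N(s) \triangleq \log\mathcal{N}(\mathbb{F},\rho,s)$, so the hypothesis reads $N(s)\lesssim p\log(1+c/s)$. Because the metric entropy grows only logarithmically here, there is no gain from running the full chaining integral; instead I would collapse the lower and upper chaining scales by taking $\alpha=\gamma$. This is admissible since $\alpha=\gamma\in[0,\gamma]$, and it makes the Dudley-type integral $\int_\alpha^\gamma \sqrt{N(s)}\,ds$ vanish identically, leaving inside the first square root of (\ref{eq:metentthm}) only the single-scale entropy term $8\sigma_w^2 N(\gamma)/T$ together with the slack term $128\alpha\sigma_w\sqrt{d_y}$.

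Next I would fix the scales as $\gamma=\alpha = 1/(\sqrt{d_y}\,\sigma_w T^2)$ and $\delta = 1/T$. The key cancellation is that the slack term then becomes $128\alpha\sigma_w\sqrt{d_y}=128/T^2$, since the factors $\sqrt{d_y}$ and $\sigma_w$ cancel against the denominator of $\alpha$; meanwhile substituting (\ref{eq:parametricregime}) turns the entropy term into $8\sigma_w^2 p\log(1+c\sqrt{d_y}\sigma_w T^2)/T$. Splitting the square root via $\sqrt{a+b}\le\sqrt a+\sqrt b$ yields a contribution $\lesssim \sqrt{\sigma_w^2 p\log(1+c\sqrt{d_y}\sigma_w T^2)/T}$ from the entropy term and $\sqrt{128}/T$ from the slack term. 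For the second line of (\ref{eq:metentthm}), the choice $\delta=1/T$ gives $3\delta = 3/T$ and $\sqrt{2\sigma_T^2(\mathbb{F},P_\star)N(1/T)}\lesssim \sqrt{\sigma_T^2(\mathbb{F},P_\star)\,p\log(1+cT)}$.

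Finally I would collect terms: the two $O(1/T)$ contributions (from the slack term and from $3\delta$) combine into the additive $1/T$ of the statement, the single-scale entropy term produces the first summand, and the generalization term produces the second summand, which is exactly the claimed bound.

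The step requiring the most care is not any individual estimate but the joint choice of scales: one must check that the slack term $128\alpha\sigma_w\sqrt{d_y}$ sits at the $T^{-2}$ level (so that, after the outer square root, it contributes only at order $T^{-1}$ and does not degrade the leading $T^{-1/2}$ rate), while simultaneously keeping $\gamma$ large enough that $\log(1+c/\gamma)$ stays $O(\log T)$. I would also verify that (\ref{eq:parametricregime}) is being invoked at the admissibly small scales $\gamma\asymp T^{-2}$ and $\delta\asymp T^{-1}$, valid in the large-$T$ regime underlying the $\lesssim$ notation. As a sanity check, an alternative route keeping $\alpha=0$ and genuinely evaluating the chaining integral through $\log(1+c/s)\le\log(1+c/\gamma)+\log(\gamma/s)$ together with $\int_0^1\sqrt{\log(1/u)}\,du=\sqrt\pi/2$ also closes, since it shows the integral is negligible relative to the single-scale term; but the $\alpha=\gamma$ collapse is cleaner and makes the origin of the additive $1/T$ transparent.
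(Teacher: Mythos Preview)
Your proposal is correct and follows essentially the same approach as the paper: set $\alpha=\gamma$ so the chaining integral vanishes, then choose $\alpha=1/(\sqrt{d_y}\,\sigma_w T^2)$ and $\delta=1/T$, substitute the parametric entropy bound, and collect terms. The paper's proof is identical in structure and in the specific choice of scales.
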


The error estimates given in Theorems~\ref{thm:nonparametricrates} and \ref{thm:parametricrates} are rate-optimal\footnote{Modulo a logarithmic term for the parametric regime.} in terms of $T$ whenever the generalization term satisfies $\sigma_T^2(\mathbb{F},P_\star)\lesssim 1/T$. We  show that this rate of decay of the subgaussian parameter $\sigma_T^2(\mathbb{F},P_\star)$ holds for instance when the autoregressive dynamics (\ref{eq:ards}) are contracting or more generally when the process (\ref{eq:ds}) is mixing. In some sense, $\sigma_T^2(\mathbb{F},P_\star)$ is a measure of the magnitude of the process (\ref{eq:ds}) and its correlation length. We develop this idea next in Section~\ref{sec:gensec}.

\subsection{Sufficient Conditions for Generalization: Stability and Mixing}
\label{sec:gensec}
As noted above, we crucially need conditions for which the spatiotemporal variance proxy satisifes $\sigma_T^2(\mathbb{F},P_\star) \lesssim 1/T$. While this scaling typically holds for i.i.d. data, we show that it also holds for temporally correlated data arising from a single trajectory under suitable stability or mixing assumptions on the dynamics \eqref{eq:ds}. 

\paragraph{Contracting systems} When working with Lipschitz systems of the form (\ref{eq:ards}) one can relate the parameter $\sigma_T^2(\mathbb{F},P_\star)$ to the stability of the map $f_\star$. Fix a norm $\|\cdot\|_{\mathsf{X}}$ on $\mathsf{X}$. We say that $f_\star$ is ($L_\star$,$\|\cdot\|_{\mathsf{X}}$)-contractive if for some $L_\star< 1$, we have that $\|f(x)-f(z)\|_{\mathsf{X}} \leq L_\star \|x-y\|_{\mathsf{X}}$ for all $x,z \in \mathsf{X}$.


\begin{proposition}[Contraction Implies Generalization]
\label{prop:cig}
Suppose that we are in the autoregressive setting (\ref{eq:ards}), that $f_\star$ is ($L_\star$,$\|\cdot\|_{\mathsf{X}}$)-contractive, and that all functions $f\in \mathbb{F}$ are $L$-Lipschitz with respect to $\|\cdot\|_2$. If further $\sup_{x\in \mathsf{X}} \|x\|_2\leq B$, it holds that
\begin{align*}
\sigma_T^2(\mathbb{F},P_\star)\leq  64 \cdot  \frac{ M^2 B^2L^2 }{m^2(1-L_\star)^2 T}
\end{align*}
where
\begin{align*}
M &= \sup_{x\neq x'} \frac{\|x-x'\|_2}{\|x-x'\|_{\mathsf{X}}} &\textnormal{ and}  & &m =\sup_{x\neq x'} \frac{\|x-x'\|_\mathsf{X}}{\|x-x'\|_{2}}.
\end{align*}
\end{proposition}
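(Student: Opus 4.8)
The plan is to unfold the definition of $\sigma_T^2(\mathbb{F},P_\star)$ and show directly that, for every fixed pair $f,g\in\mathbb{F}$, the scalar random variable $\Phi\triangleq\frac1T\sum_{t=0}^{T-1}\|f(x_t)-g(x_t)\|_2$ is $\sigma^2$-subgaussian with $\sigma^2$ equal to the claimed right-hand side (uniformly in $f,g$). Since $f,g$ are $L$-Lipschitz in $\|\cdot\|_2$, the triangle inequality shows that $x\mapsto\|f(x)-g(x)\|_2$ is $2L$-Lipschitz in $\|\cdot\|_2$, so $\Phi$ is a $\frac{2L}{T}$-weighted sum of Lipschitz functionals of the trajectory. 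The trajectory $(x_0,\dots,x_{T-1})$ is a deterministic function of the noise $(w_0,\dots,w_{T-1})$, which is only a conditionally subgaussian martingale difference sequence; the objective is to convert the contraction of $f_\star$ into a quantitative sensitivity bound for $\Phi$ in each $w_t$.

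To exploit the martingale structure I would introduce the Doob martingale $V_t\triangleq\E[\Phi\mid\mathcal{F}_t]$, with increments $\Delta_t=V_t-V_{t-1}$ and $\Phi-\E\Phi=\sum_t\Delta_t$. The heart of the argument is a coupling bound establishing $|\Delta_t|\le D$ almost surely for a deterministic $D$. Writing $\E[\Phi\mid\mathcal{F}_{t-1}]$ as the average over an independent copy $\tilde w_t$ drawn from the conditional law of $w_t$ given $\mathcal{F}_{t-1}$, and coupling the two trajectories so that all future noise $w_{t+1},w_{t+2},\dots$ is shared, $|\Delta_t|$ is controlled by $\frac{2L}{T}\sum_{j>t}\|x_j-\tilde x_j\|_2$, where $x_j,\tilde x_j$ agree up to time $t$ and branch at $x_{t+1}=f_\star(x_t)+w_t$ versus $\tilde x_{t+1}=f_\star(x_t)+\tilde w_t$. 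Two facts then combine: first, both branches are genuine states of the process, so $\|x_{t+1}-\tilde x_{t+1}\|_2\le 2B$ by the diameter bound, which caps the one-step perturbation even though the noise itself is unbounded; second, the $(L_\star,\|\cdot\|_{\mathsf X})$-contraction propagates this gap as $\|x_j-\tilde x_j\|_{\mathsf X}\le L_\star^{\,j-t-1}\|x_{t+1}-\tilde x_{t+1}\|_{\mathsf X}$ for $j\ge t+1$. Converting between $\|\cdot\|_2$ and $\|\cdot\|_{\mathsf X}$ through the equivalence constants $M,m$ (one conversion to feed the $\|\cdot\|_2$ diameter into the $\|\cdot\|_{\mathsf X}$-contraction, one to read the decayed gap back in $\|\cdot\|_2$ for the Lipschitz bound) and summing the geometric series $\sum_{k\ge0}L_\star^{k}=\frac1{1-L_\star}$ produces a deterministic $D$ of order $\frac{LB}{T(1-L_\star)}$ times these equivalence factors, uniformly in $f,g$.

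With $|\Delta_t|\le D$ almost surely and $\E[\Delta_t\mid\mathcal{F}_{t-1}]=0$, Hoeffding's lemma gives $\E[\exp(\lambda\Delta_t)\mid\mathcal{F}_{t-1}]\le\exp(\lambda^2 D^2/2)$; iterating the tower property over $t=0,\dots,T-1$ shows $\Phi-\E\Phi$ is $(TD^2)$-subgaussian. Substituting the resulting $D$ and carefully tracking the numerical constant and the $M,m$ conversions yields the stated bound $\sigma_T^2(\mathbb F,P_\star)\le 64\,\frac{M^2B^2L^2}{m^2(1-L_\star)^2T}$.

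The main obstacle is reconciling unbounded, merely conditionally subgaussian (non-i.i.d.) noise with the need for an almost-sure increment bound. A direct ``Lipschitz image of subgaussian noise'' argument fails here, because a one-step perturbation $\|w_t-\tilde w_t\|_2$ is not bounded and Lipschitz images of subgaussian vectors need not be subgaussian outside the Gaussian/log-Sobolev setting; it would also introduce a spurious $\sigma_w^2$ in place of $B^2$. The decisive structural observation that unlocks the clean Azuma route is that boundedness of $\mathsf X$ caps the branched states at distance $2B$ no matter how large the noise difference is, and that the coupling must draw \emph{both} branches from the true conditional law so that both remain valid states of the system. Setting up this coupling correctly—so that the increment bound is genuinely almost sure and the contraction can be applied in $\|\cdot\|_{\mathsf X}$—is the delicate point, and the geometric decay from contraction is precisely what makes the $T$ per-coordinate sensitivities of size $O(1/T)$ sum to the desired $O(1/T)$ variance proxy.
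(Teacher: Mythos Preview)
Your proposal is correct and follows essentially the same route as the paper's proof: a Doob--martingale decomposition of $\Phi$, an almost-sure bound on the increments via a coupling that shares future noise and branches at a single state, the observation that boundedness of $\mathsf X$ (not of the noise) caps the one-step branching at $2B$, geometric decay of the branch gap via $(L_\star,\|\cdot\|_{\mathsf X})$-contraction together with the norm conversions through $M,m$, and a final Azuma--Hoeffding step. The paper phrases the coupling slightly differently---it takes a supremum over all initial states $z,z'\in\mathsf X$ at the branching time rather than drawing an independent $\tilde w_t$ from the conditional law---but the resulting increment bound and the structure of the argument are identical.
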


The proof of Prop.~\ref{prop:cig} combines an Azuma-McDiarmid-Hoeffding like argument with a stability argument, and can be found in Appendix~\ref{sec:stabproofs}. Proposition~\ref{prop:cig} allows us to further simplify the bound (\ref{eq:nonparathm}) whenever $f_\star$ is contractive. Namely, when $\mathbb{F}$ is a bounded subset of $L$-Lipschitz functions with metric entropy scaling as $p \delta^{-q}$ and $q<2$, the bound in Theorem~\ref{thm:nonparametricrates} for the autoregressive system (\ref{eq:ards}) becomes
\begin{align*}
\E\|\hat f(\xi)-f_\star(\xi)\|_2 \lesssim \sqrt{\frac{1}{2-q}}\left(\frac{\sigma^2_w p}{T}\right)^{\frac{1}{2+q}}+ \left( \frac{M^2 B^2L^2 }{m^2(1-L_\star)^2 T}\right)^{\frac{1}{2+q}}.
\end{align*}

This bound shows that more stable systems, as captured by the Lipschitz constant $L_\star$, have smaller generalization error. This interpretation is in line with the recent trend of using stability bounds to study learning algorithms applied to data generated by a dynamical system, see for example \cite{boffi2021regret} and \cite{tu2021sample}. Finally we note that although we restricted our analysis to contracting systems, our results are easily extended to a more general notion of nonlinear stability.  In particular, we extend Proposition~\ref{prop:cig} in Appendix~\ref{sec:ext2ISS} to systems satisfying a notion of exponential incremental input-to-state stability, a standard notion from robust nonlinear control theory \citep{angeli2002lyapunov}.

\paragraph{Mixing systems} Alternatively, one may prefer to work with a stochastic notion of stability: we now demonstrate that our approach is equally applicable to mixing systems. To this end, we recall the definition of a mixing time: if $\{x_t\}_{t\in \mathbb{Z}_{\geq 0}}$ is a Markov chain with transition kernel $P(x,\cdot)$ and invariant measure $\nu_\infty$, its mixing time $t_{\mathsf{mix}}$ is given by $t_{\mathsf{mix}}\triangleq\min\{t \in \mathbb{N}: \sup_{x\in X} d_{TV}(P^t(x,\cdot), \nu_\infty)\leq 1/4\}$. Equipped with this notion, the following result can be inferred from \cite{paulin2015concentration} (see Definition 1.3 and Corollary 2.10 therein).

\begin{proposition}[Mixing Implies Generalization]
\label{prop:mig}
Assume  that $\sup_{y\in \mathsf{Y}} \|y \|_2\leq B$. If the sequence $\{x_t\}_{t\in \mathbb{Z}_{\geq 0}}$ in system (\ref{eq:ds}) is a $t_{\mathsf{mix}}$-mixing Markov chain, then the class $\mathbb{F}$ is $\sigma_T^2(\mathbb{F},P_\star)$-subgaussian with
\begin{align*}
\sigma_T^2(\mathbb{F},P_\star) \lesssim \frac{B^2t_{\mathsf{mix}}}{T}.
\end{align*}
\end{proposition}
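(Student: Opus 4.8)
The plan is to recognize the spatiotemporal variance proxy $\sigma_T^2(\mathbb{F},P_\star)$ as the subgaussian parameter of a bounded-differences functional of the Markov chain $\{x_t\}$, and then to invoke a McDiarmid-type concentration inequality for Markov chains. First I would fix an arbitrary pair $f,g\in\mathbb{F}$ and define the functional
\[
h(x_0,\dots,x_{T-1}) \triangleq \frac{1}{T}\sum_{t=0}^{T-1}\|f(x_t)-g(x_t)\|_2,
\]
which is exactly the object required to be subgaussian in the definition of $\sigma_T^2(\mathbb{F},P_\star)$. The key structural observation is that $h$ enjoys a bounded-differences property: since the hypotheses in $\mathbb{F}$ are $\mathsf{Y}$-valued and $\sup_{y\in\mathsf{Y}}\|y\|_2\le B$, we have $\|f(x)\|_2\le B$, and hence the pointwise bound $\|f(x)-g(x)\|_2\le 2B$ for all $x\in\mathsf{X}$ by the triangle inequality. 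Consequently, replacing a single coordinate $x_t$ by any $x_t'$ changes $h$ by at most $\frac{1}{T}\bigl|\,\|f(x_t)-g(x_t)\|_2-\|f(x_t')-g(x_t')\|_2\,\bigr|\le \frac{2B}{T}$, since the affected term lives in $[0,2B/T]$. Thus $h$ satisfies the bounded-differences condition with constants $c_0=\dots=c_{T-1}=2B/T$, so that $\sum_{t=0}^{T-1}c_t^2 = 4B^2/T$.

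Next I would apply the McDiarmid-type concentration inequality for Markov chains obtained via Marton couplings in \cite{paulin2015concentration} (Corollary 2.10), with the mixing time $t_{\mathsf{mix}}$ as in Definition 1.3 therein. For a $t_{\mathsf{mix}}$-mixing chain this bound reads, up to a universal constant $C$,
\[
\Prob\bigl(|h-\E h|\ge u\bigr)\le 2\exp\left(-\frac{u^2}{C\,t_{\mathsf{mix}}\sum_{t=0}^{T-1}c_t^2}\right)= 2\exp\left(-\frac{u^2}{4C\,B^2 t_{\mathsf{mix}}/T}\right),
\]
which is precisely a subgaussian tail with proxy variance of order $B^2 t_{\mathsf{mix}}/T$. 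Translating this tail bound into the moment-generating-function form used in the paper's definition of subgaussianity (a standard equivalence that costs only a change in the universal constant) shows that $h$ is $\sigma^2$-subgaussian under $P_\star$ with $\sigma^2\lesssim B^2 t_{\mathsf{mix}}/T$.

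Finally, the crucial point is that this proxy variance depends on $f,g$ only through the uniform bound $B$; it is therefore identical for every pair $f,g\in\mathbb{F}$. Taking the infimum in the definition of $\sigma_T^2(\mathbb{F},P_\star)$ then yields $\sigma_T^2(\mathbb{F},P_\star)\lesssim B^2 t_{\mathsf{mix}}/T$, as claimed. I expect the main obstacle to be the bookkeeping in the middle step: verifying that $h$ meets the precise hypotheses of the Markov-chain McDiarmid inequality of \cite{paulin2015concentration}—in particular the coupling and stationarity conventions (and, for a non-stationary initial law, accounting for a burn-in through $t_{\mathsf{mix}}$)—and then converting the resulting tail inequality into the MGF-based subgaussian parameter with the stated $t_{\mathsf{mix}}/T$ scaling. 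By contrast, the bounded-differences estimate and the concluding uniformization over $\mathbb{F}$ are comparatively routine.
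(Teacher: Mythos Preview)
Your proposal is correct and follows essentially the same approach as the paper: establish that the functional is $2B/T$ Hamming-Lipschitz (bounded differences), invoke Corollary~2.10 of \cite{paulin2015concentration} to obtain the subgaussian tail, and convert the tail bound to the MGF definition of subgaussianity (the paper cites Proposition~2.5.2 of \cite{vershynin2018high} for this last step). Your write-up is in fact somewhat more detailed than the paper's, which dispatches the proof in three sentences.
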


Comparing Proposition~\ref{prop:mig} with Proposition~\ref{prop:cig} we see that we obtain similar bounds on the subgaussian parameter $\sigma_T^2(\mathbb{F},P_\star)$. While Proposition~\ref{prop:cig} is easier to prove and has a direct interpretation in terms of the model (\ref{eq:ards}), Proposition~\ref{prop:mig} has the advantage of being equally applicable to both the more general time series model (\ref{eq:ds}) and the dynamical system (\ref{eq:ards}).

\subsection{Summary}
Our results show that a large class of nonlinear systems can be learned at the minimax optimal nonparametric rate $T^{-1/(2+q)}$ using the LSE \eqref{df:PEM}. This significantly extends our current understanding of nonasymptotic learning of dynamical systems from single trajectory data. By contrast, previous work assumes i.i.d. data or focuses on either linear models \citep{simchowitz2018learning, tsiamis2019finite, jedra2020finite} or parametric models with known nonlinearities \citep{foster2020learning, sattar2020non, mania2020active, jain2021near}. 


\section{Proof Strategy for Theorem~\ref{thm:ezpzthm}}
\label{sec:proofoutline}

Our analysis of the least squares estimator (\ref{df:PEM}) begins with the following information-theoretic decoupling estimate inspired by \cite{russo2019much} and \cite{NIPS2017_ad71c82b}.

\begin{proposition}
\label{genpredprop}
 Let $f$ and $g$ be random variables (functions) taking values in $\mathbb{F}$. If $\mathbb{F}$ is $\sigma_T^2(\mathbb{F},P_\star)$-subgaussian with respect to dynamics \eqref{eq:ds}, we have that
\begin{align}
\label{eq:decoupling}
\E  \| f(\xi)- g(\xi) \|_2  \leq \sqrt{\frac{1}{T}\E \sum_{t=0}^{T-1} \| f(x_t)-g(x_t) \|^2_2 }+\sqrt{2\sigma_T^2(\mathbb{F},P_\star) I((f,g); Z)},
\end{align}
where $Z=(x_0,\dots,x_{T-1},y_0,\dots,y_{T-1})$ and where $\xi$ has uniform mixture distribution over the covariates $(x_0,\dots,x_{T-1})$ of system (\ref{eq:ds}) and is independent of all other randomness.
\end{proposition}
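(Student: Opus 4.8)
The plan is to split the counterfactual error on the left-hand side of (\ref{eq:decoupling}) into an in-sample (empirical) term and a decoupling term that quantifies how strongly the random pair $(f,g)$ depends on the data $Z$. Recall that $\xi\sim\nu$ is independent of $(f,g)$, where $\nu$ is the uniform mixture of the marginals of $x_0,\dots,x_{T-1}$, so the target equals $\E_{(f,g)}\E_{\xi}\|f(\xi)-g(\xi)\|_2$. I would introduce the population and empirical discrepancies
\[
L(f,g)=\E_{\xi\sim\nu}\|f(\xi)-g(\xi)\|_2,\qquad \hat L(f,g)=\frac{1}{T}\sum_{t=0}^{T-1}\|f(x_t)-g(x_t)\|_2,
\]
so that the quantity to be bounded is $\E[L(f,g)]$, and use the trivial decomposition $\E[L(f,g)]=\E[\hat L(f,g)]+\big(\E[L(f,g)]-\E[\hat L(f,g)]\big)$, bounding the two pieces separately.

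For the first piece, the Cauchy--Schwarz (power-mean) inequality gives $\hat L(f,g)\le\sqrt{\tfrac{1}{T}\sum_{t}\|f(x_t)-g(x_t)\|_2^2}$ pointwise, and Jensen's inequality applied to the concave square root then yields $\E[\hat L(f,g)]\le\sqrt{\E\tfrac{1}{T}\sum_{t}\|f(x_t)-g(x_t)\|_2^2}$, which is precisely the first term on the right-hand side of (\ref{eq:decoupling}).

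The second piece is the heart of the argument. Setting $W=(f,g)$, regard $\Phi(W,Z)=\hat L(f,g)$ as a function of the pair $(W,Z)$ that depends on $Z$ only through the covariates. The key identity is that passing to the product measure $P_W\otimes P_Z$ recovers the counterfactual error: for fixed $W=(f,g)$ and an independent copy $Z'\sim P_\star$, one has $\E_{Z'}\hat L_{Z'}(f,g)=\tfrac{1}{T}\sum_t\E\|f(x_t')-g(x_t')\|_2=\E_{\xi\sim\nu}\|f(\xi)-g(\xi)\|_2=L(f,g)$, exactly because $\nu$ is the uniform mixture of the covariate marginals. Hence $\E_{P_W\otimes P_Z}\Phi=\E[L(f,g)]$ while $\E_{P_{W,Z}}\Phi=\E[\hat L(f,g)]$, so the second piece equals the decoupling gap $\E_{P_W\otimes P_Z}\Phi-\E_{P_{W,Z}}\Phi$. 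To control it I would use a Donsker--Varadhan argument: for each fixed $w$ and every $\lambda$, the variational formula gives $\lambda\big(\E_{P_{Z\mid W=w}}\Phi-\E_{P_Z}\Phi\big)\le D(P_{Z\mid W=w}\|P_Z)+\log\E_{P_Z}\exp\lambda(\Phi-\E_{P_Z}\Phi)$. By the very definition of $\sigma_T^2(\mathbb{F},P_\star)$, for fixed $f,g$ the average $\tfrac{1}{T}\sum_t\|f(x_t)-g(x_t)\|_2$ is $\sigma_T^2(\mathbb{F},P_\star)$-subgaussian under $P_\star$, so the log-MGF term is at most $\lambda^2\sigma_T^2(\mathbb{F},P_\star)/2$. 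Averaging over $W\sim P_W$, using $\E_W D(P_{Z\mid W}\|P_Z)=I(W;Z)$, and optimizing in $\lambda$ (the sign being supplied by the symmetry of subgaussianity, since $-\Phi$ is equally subgaussian) yields the two-sided gap bound $|\E_{P_{W,Z}}\Phi-\E_{P_W\otimes P_Z}\Phi|\le\sqrt{2\sigma_T^2(\mathbb{F},P_\star)\,I((f,g);Z)}$. Combining the two pieces gives (\ref{eq:decoupling}).

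The step I expect to demand the most care is the decoupling identity $\E_{P_W\otimes P_Z}\Phi=\E[L(f,g)]$: this is exactly where the single-trajectory, non-i.i.d. nature of the problem is handled, and it hinges entirely on $\nu$ being the uniform mixture of the covariate marginals, so that the trajectory-averaged empirical error has the counterfactual error as its (decoupled) mean. The accompanying subtlety is that $\sigma_T^2(\mathbb{F},P_\star)$ must control the fluctuation of the \emph{entire} correlated average $\tfrac{1}{T}\sum_t\|f(x_t)-g(x_t)\|_2$ at once; one cannot decouple the trajectory sample-by-sample as a symmetrization argument would require, which is precisely why the information-theoretic route is taken and why all temporal correlation is absorbed into $\sigma_T^2(\mathbb{F},P_\star)$. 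The remaining ingredients---Cauchy--Schwarz, Jensen, Donsker--Varadhan, and the identity $\E_W D(P_{Z\mid W}\|P_Z)=I(W;Z)$---are routine.
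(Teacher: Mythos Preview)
Your proposal is correct and follows essentially the same route as the paper: the Donsker--Varadhan variational formula together with the $\sigma_T^2(\mathbb{F},P_\star)$-subgaussianity hypothesis yields the decoupling gap $\big|\E_{P_{W,Z}}\Phi-\E_{P_W\otimes P_Z}\Phi\big|\le\sqrt{2\sigma_T^2 I((f,g);Z)}$, and Jensen handles the passage from the $\ell^1$ to the $\ell^2$ empirical average. The only cosmetic difference is that the paper applies Donsker--Varadhan once to the joint pair $(\mathbf{P},\mathbf{Q})=(P_{Z,(f,g)},\,P_Z\otimes P_{(f,g)})$ and then invokes Jensen, whereas you apply Donsker--Varadhan conditionally on $W=w$, average, and use $\E_W D(P_{Z\mid W}\Vert P_Z)=I(W;Z)$; your conditional framing is in fact the cleaner of the two, since the hypothesis gives subgaussianity of $\Phi(w,\cdot)$ under $P_\star$ for each fixed $w$ rather than joint subgaussianity under $P_Z\otimes P_{(f,g)}$.
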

The proof of the estimate (\ref{eq:decoupling}) relies on the Donsker-Varadhan variational representation of relative entropy and is given in Appendix~\ref{sec:proofofgenpred}.

To arrive at Theorem~\ref{thm:ezpzthm} we set $f=\hat f_\delta$, the discretized least squares estimator (\ref{df:PEMdisc}), and $g=f_\star$ in inequality (\ref{eq:decoupling}). Now, the discretized estimator $\hat f_\delta$ behaves similarly to $\hat f$ since the covering $\mathbb{F}_\delta$ to which $\hat f_\delta$ belongs is with respect to the uniform metric $\|\cdot\|_\infty$; that is $\|\hat f_\delta - \hat f\|_\infty \leq \delta$. Exploiting this similarity in behavior between $\hat f$ and $\hat f_\delta$ yields a bound of the form
\begin{align}
\label{eq:decouplingused}
    \E  \| \hat f(\xi)- f_\star(\xi) \|_2  \lesssim\sqrt{\frac{1}{T}\E \sum_{0=1}^{T-1} \| \hat f(x_t)-f_\star(x_t) \|^2_2 }+\delta + \sqrt{2\sigma_T^2(\mathbb{F},P_\star) I(\hat f_\delta; Z)}.
\end{align}
The advantage of inequality (\ref{eq:decouplingused}) over directly choosing $f=\hat f$ in inequality (\ref{eq:decoupling}) is that the mutual information term in (\ref{eq:decouplingused}) is with respect to $\hat f_\delta$ instead of $\hat f$. By finiteness of $\mathbb{F}_\delta$ this mutual information term is readily controlled by the metric entropy: $I(\hat f_\delta; Z)\leq \log \mathcal{N}(\mathbb{F},\|\cdot\|_\infty,\delta)$. This yields the second term appearing in inequality (\ref{eq:metentthm}) of Theorem~\ref{thm:ezpzthm} which controls the generalization performance of the discretized estimator $\hat f_\delta$. It remains to control the first term appearing on right of inequality (\ref{eq:decouplingused}).
\subsection{Offset Basic Inequality Analysis}
\label{sec:offsetsec}

We now describe our analysis of the in-sample prediction (or training) error, namely the first term appearing on the right hand side of inequality (\ref{eq:decouplingused}). We start with an inequality due to  \cite{liang2015learning}, which is a variant of the basic inequality of least squares and that is crucial to analyzing the in-sample prediction error for correlated data.
\begin{lemma}
\label{lemma:basicineq}
For the system (\ref{eq:ds}) the LSE (\ref{df:PEM}) satisfies 
\begin{align}
\label{eq:basicineq}
\frac{1}{T} \sum_{t=0}^{T-1} \| \hat f(x_t) - f_\star(x_t)\|_2^2  \leq \frac{1}{T} \sup_{f\in \mathbb{F}_\star} \sum_{t=0}^{T-1}4\langle w_t, f(x_t)\rangle-\| f(x_t)\|^2_2.
\end{align}
\end{lemma}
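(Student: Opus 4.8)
The plan is to derive the stated inequality directly from the defining optimality of the LSE, with the only genuinely structural move being the ``offset'' rewriting that manufactures the negative quadratic penalty $-\|f(x_t)\|_2^2$; everything else is elementary expansion of squares.

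First I would exploit feasibility of the ground truth: since $f_\star \in \mathbb{F}$ is admissible in the minimization (\ref{df:PEM}), the minimizer $\hat f$ can do no worse than $f_\star$, so
\[
\frac{1}{T}\sum_{t=0}^{T-1}\|y_t - \hat f(x_t)\|_2^2 \leq \frac{1}{T}\sum_{t=0}^{T-1}\|y_t - f_\star(x_t)\|_2^2.
\]
I would then substitute $y_t = f_\star(x_t) + w_t$ from (\ref{eq:ds}) and abbreviate $h \triangleq \hat f - f_\star$. The right-hand side collapses to $\frac{1}{T}\sum_t \|w_t\|_2^2$, while the left-hand side expands as $\frac{1}{T}\sum_t\bigl(\|w_t\|_2^2 - 2\langle w_t, h(x_t)\rangle + \|h(x_t)\|_2^2\bigr)$. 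Cancelling the common $\|w_t\|_2^2$ across the inequality leaves the classical basic inequality
\[
\frac{1}{T}\sum_{t=0}^{T-1}\|h(x_t)\|_2^2 \leq \frac{2}{T}\sum_{t=0}^{T-1}\langle w_t, h(x_t)\rangle.
\]

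Next comes the offset step. Doubling this inequality and moving one copy of $\frac{1}{T}\sum_t\|h(x_t)\|_2^2$ to the right gives the self-bounding form
\[
\frac{1}{T}\sum_{t=0}^{T-1}\|h(x_t)\|_2^2 \leq \frac{1}{T}\sum_{t=0}^{T-1}\bigl(4\langle w_t, h(x_t)\rangle - \|h(x_t)\|_2^2\bigr).
\]
Finally, since $h = \hat f - f_\star \in \mathbb{F}_\star$ by the definition $\mathbb{F}_\star = \mathbb{F} - f_\star$, the right-hand side is dominated by its supremum over $\mathbb{F}_\star$, which is exactly (\ref{eq:basicineq}) after recalling $\|h(x_t)\|_2^2 = \|\hat f(x_t) - f_\star(x_t)\|_2^2$ on the left.

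There is no serious analytic obstacle here; the computation is routine. The conceptual point worth emphasizing is that the factor-of-two manipulation is not cosmetic: it converts the plain basic inequality into an \emph{offset} bound whose intrinsic negative quadratic term $-\|f(x_t)\|_2^2$ later acts as a localization device, allowing the empirical process $\sup_{f\in\mathbb{F}_\star}\frac{1}{T}\sum_t\bigl(4\langle w_t,f(x_t)\rangle - \|f(x_t)\|_2^2\bigr)$ to be controlled at the fast rate via chaining, rather than forcing a crude uniform bound on $\sum_t\langle w_t,f(x_t)\rangle$. I would flag that this is the reason the result is phrased with the quadratic offset already present, since it is precisely what the subsequent maximal-inequality argument in Section~\ref{sec:offsetsec} consumes.
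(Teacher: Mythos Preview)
Your proof is correct and follows essentially the same approach as the paper: use optimality of $\hat f$ against $f_\star$ to get the basic inequality, double it to create the offset $-\|h(x_t)\|_2^2$, and then pass to the supremum over $\mathbb{F}_\star$. The paper's own proof is line-for-line the same manipulation, so there is nothing to add.
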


Inequality (\ref{eq:basicineq}) implies that it suffices to control the family of tilted random walks with increments $4\langle w_t, f(x_t)\rangle-\|f(x_t)\|_2^2$. The right hand side of equation (\ref{eq:basicineq}) is the supremum of a stochastic process over $\mathbb{F}_\star$. This becomes more clear if we define
\begin{align}
\label{eq:mtcontdef}
M_T(f) \triangleq \sum_{t=0}^{T-1}4 \langle w_t, f(x_t)\rangle-\|f(x_t)\|_2^2,
\end{align}
which for each fixed $T$ is a real-valued process over $\mathbb{F}_\star$. The supremum of the process $M_T(f)$ in (\ref{eq:mtcontdef}) can be viewed as a self-normalized version of the (subgaussian) complexity of $\mathbb{F}_\star$; as the next lemma shows, regardless of the member $f$ and the time-horizon $T$, its scale is always unity. 
\begin{lemma}
\label{lemma:MTscale}
For any function space $\mathbb{F}$,  any $f\in \mathbb{F}_\star$ and $\lambda \in [0,1/2\sigma_w^2]$ we have that
\begin{align*}
\E \exp\left( \lambda M_T(f) \right) \leq 1.
\end{align*}
\end{lemma}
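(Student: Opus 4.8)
The plan is to exploit the martingale-difference structure of $\{w_t\}$ and to peel off the factors of the exponential moment one time step at a time, from $t = T-1$ down to $t = 0$, using the tower property of conditional expectation. Writing the exponential as a product, $\exp(\lambda M_T(f)) = \prod_{t=0}^{T-1} \exp\bigl(\lambda[4\langle w_t, f(x_t)\rangle - \|f(x_t)\|_2^2]\bigr)$, I would first condition the last factor on $\mathcal{F}_{T-2}$. Because $x_{T-1}$ is $\mathcal{F}_{T-2}$-measurable, the function value $f(x_{T-1})$ is $\mathcal{F}_{T-2}$-measurable as well, so the deterministic (given $\mathcal{F}_{T-2}$) offset $\exp(-\lambda\|f(x_{T-1})\|_2^2)$ pulls out of the conditional expectation, and it remains only to bound $\E\bigl[\exp(4\lambda\langle w_{T-1}, f(x_{T-1})\rangle) \mid \mathcal{F}_{T-2}\bigr]$.

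For this inner expectation I would invoke the conditional $\sigma_w^2$-subgaussianity of $w_{T-1}$ given $\mathcal{F}_{T-2}$. Setting $s = f(x_{T-1})/\|f(x_{T-1})\|_2$ (and handling the degenerate case $f(x_{T-1}) = 0$, where the factor is trivially one), the inner product becomes $\|f(x_{T-1})\|_2 \langle w_{T-1}, s\rangle$, and the subgaussian inequality applied with scalar coefficient $4\lambda\|f(x_{T-1})\|_2$ yields $\E\bigl[\exp(4\lambda\langle w_{T-1}, f(x_{T-1})\rangle) \mid \mathcal{F}_{T-2}\bigr] \leq \exp\bigl(8\lambda^2\sigma_w^2 \|f(x_{T-1})\|_2^2\bigr)$. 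Combining this with the offset factor, the conditional expectation of the whole last term is at most $\exp\bigl(\lambda\|f(x_{T-1})\|_2^2(8\lambda\sigma_w^2 - 1)\bigr)$, and completing the square shows that this quadratic-versus-linear exponent is nonpositive once $\lambda\sigma_w^2$ falls below the threshold dictated by the balance between the subgaussian term and the quadratic offset. Consequently $\E[\exp(\lambda M_T(f)) \mid \mathcal{F}_{T-2}] \leq \exp(\lambda M_{T-1}(f))$, so $\exp(\lambda M_t(f))$ is a supermartingale in $t$. Iterating the peeling downward to $t = 0$ (formally an induction on $T$, repeatedly taking $\E[\,\cdot \mid \mathcal{F}_{t-1}]$ and applying the tower property) collapses the product factor by factor, each contributing a multiplier at most one, and delivers $\E\exp(\lambda M_T(f)) \leq 1$.

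The main obstacle is the legitimacy of applying the subgaussian bound with the \emph{random} direction $s = f(x_{T-1})/\|f(x_{T-1})\|_2$, which is $\mathcal{F}_{T-2}$-measurable rather than a fixed point of the sphere. I would resolve this by observing that, once we condition on $\mathcal{F}_{T-2}$, both the direction $s$ and the coefficient $4\lambda\|f(x_{T-1})\|_2$ are frozen, so the pointwise subgaussian inequality from the Preliminaries applies $\mathcal{F}_{T-2}$-almost surely; a standard approximation of $\mathcal{F}_{T-2}$-measurable directions by simple (finitely-valued) functions makes this precise. The remaining work — the degenerate $f(x_t) = 0$ case and keeping track that $f(x_t)$ is $\mathcal{F}_{t-1}$-measurable at each peeling step so that only the genuinely new randomness $w_t$ is integrated out — is routine bookkeeping.
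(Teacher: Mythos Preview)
Your proposal is correct and follows essentially the same approach as the paper's own proof: peel off the last factor using the tower property, invoke conditional $\sigma_w^2$-subgaussianity with the $\mathcal{F}_{T-2}$-measurable direction $f(x_{T-1})/\|f(x_{T-1})\|_2$, observe that the resulting exponent $(8\lambda\sigma_w^2-1)\lambda\|f(x_{T-1})\|_2^2$ is nonpositive in the stated range of $\lambda$, and iterate. Your added remarks on the measurability of the random direction and the degenerate case $f(x_t)=0$ are the only (minor) embellishments beyond what the paper writes out.
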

This leads to the maximal inequality (\ref{eq:maxineqselfnorm}) of Lemma~\ref{lemma:selfnormmetent} below.

\begin{lemma}
\label{lemma:selfnormmetent}
Let $S$ be a finite subset of the shifted metric space $\mathbb{F}_\star$. Then
\begin{align}
\label{eq:maxineqselfnorm}
\E \sup_{f\in S} M_T(f) \leq 2\sigma_w^2\log |S|. 
\end{align}
\end{lemma}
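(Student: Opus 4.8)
The plan is to establish the maximal inequality (\ref{eq:maxineqselfnorm}) as a direct consequence of the exponential moment bound in Lemma~\ref{lemma:MTscale}, using the standard union-bound-over-a-finite-set argument together with the Laplace transform (Cram\'er--Chernoff) method. The key observation is that Lemma~\ref{lemma:MTscale} hands us, for free, a uniform bound $\E \exp(\lambda M_T(f)) \leq 1$ valid simultaneously for every $f \in \mathbb{F}_\star$ and every $\lambda$ in the fixed range $[0, 1/(2\sigma_w^2)]$. This is the crucial input: because the bound is $1$ (not merely $\exp(\lambda^2 v /2)$ for some variance proxy $v$), the self-normalization built into the increments $4\langle w_t, f(x_t)\rangle - \|f(x_t)\|_2^2$ has already absorbed the scale, and no localization or chaining is needed at this step.

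The key steps, in order, are as follows. First I would pass from the supremum to the exponential via Jensen's inequality applied to the (convex, increasing) map $x \mapsto \exp(\lambda x)$: for any $\lambda > 0$,
\begin{align*}
\exp\left(\lambda \,\E \sup_{f \in S} M_T(f)\right) \leq \E \exp\left(\lambda \sup_{f \in S} M_T(f)\right).
\end{align*}
Second, I would bound the supremum inside the exponential by the sum over the finite set $S$, replacing the maximum of nonnegative exponentials by their sum:
\begin{align*}
\E \exp\left(\lambda \sup_{f \in S} M_T(f)\right) = \E \max_{f \in S} \exp(\lambda M_T(f)) \leq \sum_{f \in S} \E \exp(\lambda M_T(f)) \leq |S|,
\end{align*}
where the final inequality invokes Lemma~\ref{lemma:MTscale}, which applies for every $f \in S \subset \mathbb{F}_\star$ provided $\lambda \in [0, 1/(2\sigma_w^2)]$. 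Third, I would take logarithms of the resulting chain $\exp(\lambda \,\E \sup_{f \in S} M_T(f)) \leq |S|$ and divide by $\lambda$ to obtain $\E \sup_{f \in S} M_T(f) \leq (\log |S|)/\lambda$. Finally, I would optimize over the admissible range by selecting the largest permissible value $\lambda = 1/(2\sigma_w^2)$, which yields the claimed bound $\E \sup_{f \in S} M_T(f) \leq 2\sigma_w^2 \log |S|$.

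There is no serious obstacle here; the result is a textbook Laplace-transform maximal inequality, and the only subtlety is that the constraint $\lambda \leq 1/(2\sigma_w^2)$ forces us to use that specific value of $\lambda$ rather than the unconstrained optimizer that one would pick for a genuine Gaussian maximum. The mild point worth stating carefully is that the optimization over $\lambda$ is monotone on $[0, 1/(2\sigma_w^2)]$: since $(\log|S|)/\lambda$ is decreasing in $\lambda$, the best admissible choice is the right endpoint. I would also note that all the analytical work has been front-loaded into Lemma~\ref{lemma:MTscale}, so this lemma is essentially a bookkeeping step that packages that exponential moment control into the form needed downstream for the discretization and chaining arguments that produce the training-error term of Theorem~\ref{thm:ezpzthm}.
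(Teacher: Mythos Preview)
Your proposal is correct and follows essentially the same route as the paper's proof: Jensen's inequality, the union bound over the finite set $S$, an application of Lemma~\ref{lemma:MTscale}, and the choice $\lambda = 1/(2\sigma_w^2)$. The only difference is cosmetic---you spell out the monotonicity of $(\log|S|)/\lambda$ to justify the endpoint choice, whereas the paper simply sets $\lambda = 1/(2\sigma_w^2)$ directly.
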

As observed by \cite{liang2015learning}, if we had not included the offset term $-\|f(x_t)\|^2$, a naive bound would have yielded $\E \sup_{f\in S} M_T(f) \lesssim \sqrt{ T   \ln |S|}$, penalizing us by a factor $\sqrt{T}$ for the scale of $\sum_{t=0}^{T-1} \langle w_t, f(x_t) \rangle$. 

If we are given a finite class $|\mathbb{F}|<\infty$, we can take $\delta=0$ in (\ref{eq:decouplingused}) and Lemma~\ref{lemma:selfnormmetent} in combination with Proposition~\ref{genpredprop} directly give generalization bounds with optimal dependency on $\log |\mathbb{F}|$, see Appendix~\ref{sec:finiteclasses} for details. For large spaces with metric structure, we combine this analysis with discretization  and chaining to arrive at Theorem~\ref{thm:ezpzthm}, see Appendix~\ref{sec:mainproofsec}.

\section{Applications of Theorem~\ref{thm:ezpzthm}}
\label{sec:appsec}

Having outlined the proof of Theorem~\ref{thm:ezpzthm}, this section is devoted to various examples, demonstrating that we obtain tight rates. As a first example,  consider the learning Lipschitz dynamics on $\mathbb{R}$. Combining Theorem~\ref{thm:nonparametricrates} with Proposition~\ref{prop:cig}, and  that $1$-dimensional $1$-Lipschitz functions  exhibit entropy growth $\log \mathcal{N}(\mathbb{F},\rho,\delta) \lesssim \frac{1}{\delta}$ the following is immediate.

\begin{example}
\label{ex:lip}
Suppose that we are in the autoregressive setting (\ref{eq:ards}). If $$\mathbb{F} = \{f:  \mathbb{R} \to [0,1] \:;\: \textnormal{$f(0)=0$ and each $f$ is $1$-Lipschitz}\} $$ with $f_\star\in \mathbb{F}$ being $L_\star$-Lipschitz, $L_\star <1$, then the least squares estimator (\ref{df:PEM}) achieves
\begin{align*}
\E \|\hat f(\xi) - f_\star(\xi)\|_2 \lesssim  \left[\frac{1}{(1-L_\star)^{2/3}}+\sigma_w^{2/3}\right] T^{-1/3}.
\end{align*}
The factor $T^{-1/3}$ is optimal, and appears even in the i.i.d. case, see for example \cite[Ch. 13]{wainwright2019high}.
\end{example}

More generally, growth rates of the form $\log \mathcal{N}(\mathbb{F}_\star,\rho,\delta) \lesssim p \left(\frac{1}{\delta}\right)^{q}$, as considered in Theorem~\ref{thm:nonparametricrates}, are typical for spaces comprised of smooth functions as computed in \cite[Sec. 5]{kolmogorov1961entropy}. For instance, it can be shown that the space $C^\beta(\mathsf{X} \to \mathsf{Y})$ of $\beta$-times differentiable functions between connected compact subsets of Euclidean space $\mathsf{X}$ and $\mathsf{Y}$ can be bounded as\footnote{This is a consequence of Theorem XIII in \cite{kolmogorov1961entropy}.}
\begin{align*}
\log \mathcal{N}(C^\beta(\mathsf{X} \to \mathsf{Y}),\|\cdot\|_\infty,\delta) \lesssim d_y^{1+\frac{d_x}{2\beta}} \left(\frac{1}{\delta}\right)^{\frac{d_x}{\beta}}.
\end{align*}

In the following example, we consider function spaces $\mathbb{F}$ specified by a Reproducing Kernel Hilbert Space (RKHS) satisfying an eigenvalue decay condition.

\begin{example}
\label{ex:rkhs}
Consider model (\ref{eq:ds}) with $d_y =1$, and suppose that $f_\star$ is known to belong to some RKHS $\mathbb{H}$. More precisely, we fix a probability measure $\mathbf{P}$ on a compact set $\mathsf{X}$ and let $K :\mathsf{X} \times \mathsf{X}\to \mathbb{R}$ be a differentiable positive semidefinite kernel function. Assume $K$ has eigenexpansion $K(x,z) = \sum_{i=1}^\infty\lambda_i \phi_i(x) \phi_i(z)$ where $\{\phi_i\}_{i=1}^\infty$ is an  orthonormal basis of $L^2(\mathbf{P})$, and where $\{\lambda_i\}_{i=1}^\infty$ is a sequence of nonnegative real numbers. Recall that the RKHS associated to $K$ is then given by
\begin{align*}
\mathbb{H} = \left\{ f = \sum_{i=1}^\infty b_i \phi_i \, \Big| \, \{b_i\} \subset l^2(\mathbb{N}), \sum_{i=1}^\infty \frac{b_i^2}{\lambda_i} < \infty \right\}.
\end{align*}
Denote also by $\mathbb{B}_\mathbb{H}$ the unit ball in $\mathbb{H}$ induced by the inner product  $\displaystyle\langle f,g \rangle_{\mathbb{H}}  = \sum_{i=1}^\infty \frac{\langle f,\phi_i \rangle \langle g,\phi_i\rangle }{\lambda_i}$, where $\langle \cdot , \cdot \rangle$ is the standard inner product in $L^2(\mathbf{P})$. See Chapter 12 of \cite{wainwright2019high} for further background.

 Suppose the kernel $K$ satisfies the regularity conditions $\sup_{x\in \mathsf{X}} | \phi_j (x)| \leq A$ and  $\lambda_j\lesssim j^{-2\alpha}$ for some $\alpha>1/2$ and all $j\in \mathbb{N}$. Assume further that $\sup_{y\in\mathsf{Y}} |y| \leq B$ and that $(x_0,\dots,x_{T_1})$ is a Markov chain with finite mixing time $\tau$. Then if $f_\star \in \mathbb{B}_\mathbb{H}$, the least squares estimator $\hat f$ defined in equation (\ref{df:PEM}) with $\mathbb{F}=\mathbb{B}_\mathbb{H}$ satisfies
\begin{multline*}
    \E\|\hat f(\xi) - f_\star(\xi)\|_2 \\
    \lesssim  \left(A^{1/(2\alpha+1)} \sqrt{\frac{\alpha}{2\alpha-1}}\left(\frac{\sigma^2_w }{T}\right)^{\frac{\alpha}{2\alpha+1}}+ A^{1/(2\alpha+1)}\left(\frac{B^2t_{\mathsf{mix}}}{T}\right)^{\frac{\alpha}{2\alpha+1}}\right)\polylog \left(1+ \lambda_1 A T \right).
\end{multline*}
The result relies on a metric entropy calculation of $\mathbb{B}_{\mathbb{H}}$ in the supremum metric, which can be found in Appendix~\ref{sec:metentcalcrkhs}. Notice that a faster eigenvalue decay corresponds to a faster rate of convergence. In particular, as $\alpha \to \infty$ the rate of convergence approaches the parametric rate $T^{-1/2}$, modulo a polylogarithmic factor. Again, the rate $T^{-\alpha/(2\alpha+1)}$ is optimal even in the i.i.d. setting ($t_{\mathsf{mix}}=1$). A supporting experiment can be found in Appendix~\ref{sec:experiments}.
\end{example}

The next example revisits the generalized linear models using Theorem~\ref{thm:parametricrates}. This model has recently been analyzed using recursive methods \citep{foster2020learning, sattar2020non, jain2021near}. 

\begin{example}
\label{ex:gls}
Consider a system of the form
\begin{align}
\label{gls}
x_{t+1} = \phi(A_\star x_t) +w_t
\end{align}
and define
\begin{align*}
\mathbb{F}^\phi = \{f\in C(X\to X); f(\cdot) = \phi(A \: \cdot \:) \textnormal{ with } A \in \mathbb{R}^{d_x\times d_x} \textnormal{ and } \| A\|_F \leq C \}
\end{align*} 
for some $C>0$ and where the Frobenius norm of $A$ is given by $\|A\|_F= \sqrt{\tr A^\top A}$. This setting is a special case of the autoregressive system (\ref{eq:ards}) with $ f_\star(\cdot) = \phi(A_\star \: \cdot \:)$. 

If $\phi$ is $1$-Lipschitz with respect to $\|\cdot\|_2$,  $\sigma_{\max}(A_\star) \leq L_\star$ so that $\phi(A_\star \cdot)$ is $L_\star$-contractive.  Suppose further $\sup_{x\in \mathsf{X}}\|x\|_2\leq B$, then  the least squares estimator $\hat f$  (\ref{df:PEM}) for the system (\ref{gls}) with hypothesis class $\mathbb{F}=\mathbb{F}^\phi$ satisfies the error bound 
\begin{align*}
\E\| \hat f(\xi) - \phi(A_\star \xi)\|_2  \lesssim \sqrt{\frac{\sigma_w^2}{T}  d_x^2 \log (1+2\sigma_w \sqrt{d_x} BCT^2)}+\sqrt{\frac{B^2}{(1-L_\star)^2 T} d_x^2 \log (1+2BCT)}.
\end{align*}
A proof of this claim can be found in Appendix~\ref{sec:ex:gls}. The dependency on $T, d_x$ and $L_\star$ matches Theorem 2 of
 \cite{foster2020learning},  which is optimal in $T$ and $d_x$. 
\end{example}

\section{Discussion}
We have leveraged recently developed information-theoretic tools \citep{russo2019much, NIPS2017_ad71c82b} to analyze the nonparametric LSE (\ref{df:PEM}) for learning dynamical systems. Our analysis yields, to the best of our knowledge, the first rate-optimal bounds for nonparametric estimation of stable or otherwise mixing nonlinear systems from a single trajectory.  In addition, our results are able to capture, as a special case, existing parametric rates in the literature \citep{foster2020learning,sattar2020non}.

While our bounds are in expectation, similar tools applied via exponential stochastic inequalities have recently been used to provide high probability generalization bounds for statistical learning  \citep{hellstrom2020generalization, grunwald2021pac}. Combining our results with these methods could potentially also yield control of  $\|\hat f(\xi)-f_\star(\xi)\|_2$ with high probability, and is an exciting direction for future work. To arrive at our bounds, we leveraged the decoupling technique of \cite{russo2019much} and \cite{NIPS2017_ad71c82b}. To apply these techniques to the system (\ref{eq:ds}), we had to control the subgaussian parameter $\sigma_T^2(\mathbb{F},P_\star)$, which captures the spatiotemporal spread of the covariates. We showed that this term can be controlled using either control-theoretic stability notions, or more general mixing properties.

While this paper develops tools to estimate $f_\star$ in system (\ref{eq:ds}), we believe that the general technique developed is more broadly applicable, and of independent interest. For example, since the variance proxy $\sigma_T^2(\mathbb{F},P_\star)$ captures the subgaussian parameter of the loss function in statistical learning \citep{NIPS2017_ad71c82b}, we prove in Appendix~\ref{subsec:genbounds} a generalization bound for single-trajectory learning for Lipschitz loss functions. 

Finally, an open problem is to determine for which learning problems the system (\ref{eq:ds}) is required to mix in the single trajectory setting. Most previous works on learning in nonlinear dynamical systems rely on similar mixing time or stability arguments. The cost of this is typically a multiplicative factor in the final bound that degrades as stability is lost  \citep{foster2020learning, sattar2020non, boffi2021regret}. In contrast, it is well-known that this dependency can be avoided for learning in linear systems \citep{lai1982least, simchowitz2018learning}. Recently \cite{jain2021near} showed under a strong invertibility condition that dependency on the mixing time can also be avoided  for the generalized linear model (\ref{gls}). This leaves open the question whether learning without mixing is possible in situations beyond the generalized linear model.
\paragraph{Acknowledgements} 
Ingvar Ziemann and Henrik Sandberg are supported by the Swedish Research Council (grant 2016-00861). Nikolai Matni is supported in part by NSF awards CPS-2038873 and CAREER award ECCS-2045834, and a Google Research Scholar award.

\appendix
\newpage
\tableofcontents
\newpage
\section{Proof of Theorem~\ref{thm:ezpzthm} and its Corollaries}
\label{sec:mainproofsec}
We now turn to the proof of Theorem~\ref{thm:ezpzthm}. First, we begin by applying Proposition~\ref{genpredprop} to the discretized estimator $\hat f$. Let us begin by bounding the generalization error of the quantized estimator $\hat f_\delta$ as defined by (\ref{df:PEMdisc}). We may write
\begin{equation}
\begin{aligned}
\label{eq:fast1}
\E \|\hat f_\delta(\xi)-f_\star(\xi)\|_2 &\leq\sqrt{\int \frac{1}{T}\sum_{t=1}^T \|\hat f_\delta(x_t)-f_\star(x_t) \|^2_2  dP_{x^T,\hat f_\delta}}+\sqrt{2\sigma_T^2(\mathbb{F},P_\star) I(\hat f_\delta; x^T)}\\
&\leq\sqrt{\int \frac{1}{T}\sum_{t=1}^T \|\hat f_\delta(x_t)-f_\star(x_t) \|^2_2  dP_{x^T,\hat f_\delta}}+\sqrt{2\sigma_T^2(\mathbb{F},P_\star)\log \mathcal{N}(\mathbb{F},\rho,\delta)} .
\end{aligned}
\end{equation}
The first  inequality is just Proposition~\ref{genpredprop} applied to $\hat f_\delta$, whereas the second inequality follows from the fact that $ I(\hat f_\delta; x^T) \leq \log \mathcal{N}(\mathbb{F},\rho,\delta)$, since the random variable $\hat f_\delta$ can take on at most $\mathcal{N}(\mathbb{F},\rho,\delta)$ different values and the bound $I(\hat f_\delta;x^T) \leq H(\hat f_\delta) \leq\log \mathcal{N}(\mathbb{F},\rho,\delta)$. 

It remains to bound the in-sample-prediction error. We have
\begin{align}
\label{eq:fast2}
\int \frac{1}{T}\sum_{t=1}^T \|\hat f_\delta(x_t)-f_\star(x_t) \|^2_2  dP_{x^T,\hat f_\delta} \leq \int \frac{2}{T}\sum_{t=1}^T \|\hat f(x_t)-f_\star(x_t) \|^2_2  dP_{x^T,\hat f} +2\delta^2
\end{align}
by construction of $\hat f_\delta$, the parallelogram law, and the fact that $\mathbb{F}$ is metrized by the supremum norm.

The main technincal chaining step is given in Lemma~\ref{lemma:chainlemma}. Namely, by  appealing to Lemma~\ref{lemma:chainlemma}  and combining with (\ref{eq:fast1}) and (\ref{eq:fast2}) we find

\begin{multline}
\label{eq:slow3}
\E \|\hat f(\xi)-f_\star(\xi)\|_2 \\
\leq\sqrt{\frac{8\sigma^2_w\log \mathcal{N}(\mathbb{F},\rho,\gamma)}{T}+128\alpha\sigma_w\sqrt{d_y}+64\frac{\sigma_w } {\sqrt{T}} \int_\alpha^\gamma \sqrt{\log \mathcal{N}(\mathbb{F},\rho,s)}ds  +2\delta^2}\\
+\delta+\sqrt{2\sigma_T^2(\mathbb{F},P_\star)\log \mathcal{N}(\mathbb{F},\rho,\delta)}.
\end{multline}
since $\E \|\hat f(\xi)-f_\star(\xi)\|_2\leq\E \|\hat f_\delta(\xi)-f_\star(\xi)\|_2+\delta$ by the triangle inequality. The result follows after  pulling the factor $2\delta^2$ out of the square root sign using the triangle inequality.
\hfill $\blacksquare$

For large spaces and fine grained coverings, the metric entropy starts to dominate the scale free process $M_T(f)$ appearing in Lemma~\ref{lemma:selfnormmetent}.  The analysis of $M_T(f)$ in Lemma~\ref{lemma:chainlemma} below essentially follows that in \cite{liang2015learning} (compare with their Lemma 6) with certain slight simplifications due to the added structure the uniform topology on $C(\mathsf{X} \to \mathsf{Y})$ affords us. We begin with an analogue of Lemma~\ref{lemma:selfnormmetent} which takes the scale of the functions considered into account.

The proof of Theorem~\ref{thm:ezpzthm} requires both  Lemma~\ref{lemma:rootmetent} and Lemma~\ref{lemma:selfnormmetent} to accomplish succesful chaining for a variety of metric entropy scalings. While these results are quite similar, for typically metric entropy scalings ($q<2$) Lemma~\ref{lemma:selfnormmetent} performs better due to self-normalization. However, once $q\geq 2$ Dudley's entropy integral becomes singular near $0$, wherefore we also require the cruder discretization provided by Lemma~\ref{lemma:rootmetent}. In other words, there is a critical radius $r \asymp \sqrt{\frac{\log \mathcal{N}(\mathbb{F}_\star,\rho,r)}{T}}$ below which self-normalization becomes insignificant and the supremum norm bound $\|f\|_\infty\leq r$ starts to become increasingly important to control the supremum of $M_T(f)$.

\begin{lemma}
\label{lemma:chainlemma}
Fix a metric space $(\mathbb{F},\rho)$ with $\rho(f,g) = \sup_{x\in X} \|f(x) -g(x)\|_2$. Then with $M_T(f)$ defined by equation (\ref{eq:mtcontdef}), we have that
\begin{align*}
&\E \sup_{f\in \mathbb{F}_\star} \frac{1}{T} M_T(f)\\ &\leq \inf_{\gamma> 0, \alpha \in [0,\gamma]} \Bigg\{ \frac{8\sigma^2_w\log \mathcal{N}(\mathbb{F},\rho,\gamma)}{T}+128\alpha\sigma_w \sqrt{d_y}+64\frac{\sigma_w } {\sqrt{T}} \int_\alpha^\gamma \sqrt{\log \mathcal{N}(\mathbb{F},\rho,s)}ds  \Bigg\}.
\end{align*}
\end{lemma}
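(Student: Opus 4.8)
The plan is to run a multiresolution (Dudley-type) chaining argument on the offset process $M_T$, following the offset Rademacher analysis of \cite{liang2015learning} but exploiting that the cover is taken in the uniform metric $\rho=\sup_x\|\cdot\|_2$, which renders every approximation residual deterministic. Fix $\gamma>0$ and $\alpha\in[0,\gamma]$ and introduce the geometric scales $s_j=\gamma 2^{-j}$ for $j=0,1,\dots,N$, with $N$ chosen so that $s_N\asymp\alpha$. For each $j$ let $T_j\subset\mathbb{F}_\star$ be a minimal $s_j$-cover, so $|T_j|=\mathcal{N}(\mathbb{F},\rho,s_j)$ (translation invariance of $\rho$ gives $\mathcal{N}(\mathbb{F}_\star,\rho,\cdot)=\mathcal{N}(\mathbb{F},\rho,\cdot)$), and let $\pi_j:\mathbb{F}_\star\to T_j$ send each $f$ to a nearest cover element, so that $\|f-\pi_j(f)\|_\infty\le s_j$ for every $f$. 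I would then telescope the linear part of $M_T(f)$ along the chain $\pi_0(f),\pi_1(f),\dots,\pi_N(f),f$.

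Three groups of terms then appear, each controlled by a different mechanism. First, the coarsest link carries the full offset process $M_T(\pi_0(f))$ over the finite set $T_0$; here I invoke the self-normalized maximal inequality of Lemma~\ref{lemma:selfnormmetent} to obtain $\E\sup_{g\in T_0}M_T(g)\le 2\sigma_w^2\log\mathcal{N}(\mathbb{F},\rho,\gamma)$, which after dividing by $T$ produces a term of order $\sigma_w^2\log\mathcal{N}(\mathbb{F},\rho,\gamma)/T$; crucially, it is the offset that lets us avoid the $\sqrt{T}$ penalty at this scale. Second, each chaining increment satisfies $\|\pi_j(f)-\pi_{j-1}(f)\|_\infty\lesssim s_{j-1}$, and by conditional $\sigma_w^2$-subgaussianity of $w_t$ the martingale $\sum_t 4\langle w_t,(\pi_j(f)-\pi_{j-1}(f))(x_t)\rangle$ is an $O(s_{j-1}^2\sigma_w^2 T)$-subgaussian sum; the exponential-moment control underlying Lemma~\ref{lemma:MTscale} then gives a maximal bound of order $s_{j-1}\sigma_w\sqrt{T\log\mathcal{N}(\mathbb{F},\rho,s_j)}$ over the at most $|T_j|\,|T_{j-1}|$ pairs. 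Dividing by $T$ and summing the geometric series converts the sum into the Dudley entropy integral $\sigma_w T^{-1/2}\int_\alpha^\gamma\sqrt{\log\mathcal{N}(\mathbb{F},\rho,s)}\,ds$. Third, the final residual $f-\pi_N(f)$ has uniform norm at most $\alpha$, so its linear contribution is at most $\tfrac{4}{T}\sum_t\|w_t\|_2\,\alpha$; taking expectations and using $\E\|w_t\|_2\lesssim\sigma_w\sqrt{d_y}$ for $\sigma_w^2$-subgaussian noise in $\mathbb{R}^{d_y}$ produces a term of order $\alpha\sigma_w\sqrt{d_y}$.

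The step I expect to be the main obstacle is the bookkeeping of the quadratic offset $-\sum_t\|f(x_t)\|_2^2$ through the chain, since this is the only genuinely non-additive piece. The idea is to split it so that the full offset stays attached to the coarsest approximation $\pi_0(f)$ — where self-normalization is indispensable — while the offset corrections incurred along the chain are controlled by a parallelogram/Young inequality of the form $-\|f(x_t)\|_2^2\le-\tfrac12\|\pi_j(f)(x_t)\|_2^2+\|f(x_t)-\pi_j(f)(x_t)\|_2^2$, combined with the deterministic bound $\|f-\pi_j(f)\|_\infty\le s_j$. Because the metric is the supremum norm, these residual offset terms are bounded pointwise by $s_j^2$ rather than by a random empirical quantity, which is exactly the simplification over \cite{liang2015learning} and ensures they are dominated by (or absorbed into) the linear chaining increments and the base term rather than generating new contributions. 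Finally, taking the infimum over $\gamma$ and $\alpha\in[0,\gamma]$ yields the stated bound; the constants $8$, $128$, and $64$ would be recovered by a careful but routine tracking of the factors of $4$, the parallelogram corrections, and the geometric-series sums.
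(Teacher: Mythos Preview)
Your proposal is correct and mirrors the paper's proof: self-normalized maximal inequality at the coarse scale $\gamma$ (Lemma~\ref{lemma:selfnormmetent}), standard subgaussian chaining on the residual in $\gamma\mathbb{B}_\star$ (via the increment bound of Lemma~\ref{lemma:rootmetent}), and a Cauchy--Schwarz bound $\E\|w_t\|_2\lesssim\sigma_w\sqrt{d_y}$ at the finest scale $\alpha$. The one simplification the paper exploits that you may be overthinking is that the parallelogram/Young step is needed only \emph{once}, at the top-level discretization $f\mapsto\pi_0(f)$; on the residual ball $\gamma\mathbb{B}_\star$ the offset $-\|h(x_t)\|_2^2$ is nonpositive and can simply be dropped, so the chaining from $\gamma$ down to $\alpha$ runs over the purely linear martingale process $\sum_t 4\langle w_t,h(x_t)\rangle$ with no further quadratic bookkeeping at the intermediate scales.
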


 As noted in \cite{liang2015learning}, the optimal value for $\gamma$ in Lemma~\ref{lemma:chainlemma}  is of the same nature as when obtained by other methods, see for example Chapter 13 of \cite{wainwright2019high} for a more standard approach. 
 
 The idea below is to decompose the supremum over $\mathbb{F}_\star$ in inequality (\ref{eq:basicineq}) by $\mathbb{F}_\star = \{\mathbb{F}_\star\cap \alpha \mathbb{B}_\star \} \cup \{\mathbb{F}_\star\setminus \alpha \mathbb{B}_\star \}$ where $\mathbb{B}_\star$ is the unit ball in the space of bounded functions, centered at $f_\star$. On $\mathbb{F}_\star\cap \alpha \mathbb{B}_\star $, $M_T(f)$, the process (\ref{eq:mtcontdef}), is small since $\|f\|_\infty \leq  \alpha$. The role of chaining is to show that it suffices to approximate $M_T(f)$ on $\mathbb{F}_\star\setminus \alpha \mathbb{B}_\star$ at low resolution and thus rely on Lemma~\ref{lemma:selfnormmetent} with small $|S|$.

\begin{proof}
Observe first that for any fixed $\alpha>0$ we have, simply by discarding the negative second order term, Cauchy-Schwarz, and a standard subgaussian concentration inequality for $\E \|w_t\|_2$:
\begin{align}
\label{eq:smallestscale}
\E \sup_{f\in \mathbb{F}_\star \cap \alpha \mathbb{B}_\star} \frac{1}{T} M_T(f)  \leq \E \sup_{f\in \mathbb{F}_\star \cap \alpha \mathbb{B}} \frac{4}{T} \sum_{t=0}^{T-1} \langle w_t, f(x_t)\rangle  \leq 16  \alpha \sigma_w \sqrt{d_y}.
\end{align}
A standard one-step discretization bound (c.f. the proof of Proposition 5.17 in \cite{wainwright2019high}) combined with the finite class maximal inequality of Lemma~\ref{lemma:selfnormmetent} yields for fixed $\gamma  > 0$:
\begin{align*}
\E \sup_{f\in \mathbb{F}_\star} \frac{1}{T} M_T(f) \leq   \frac{8\sigma^2_w\log \mathcal{N}(\mathbb{F}_\star,\rho,\gamma)}{T}+ 2\E  \sup_{f\in \mathbb{F}_\star\cap \gamma \mathbb{B}_\star} \frac{1}{T} M_T(f)
\end{align*}
Having extracted the fast rate term for scales larger than $\gamma$, we proceed with a chaining bound on the second term above. Since $M_T(f)$ satisfies the maximal inequality (\ref{eq:naivesubgMT}) with $r=\gamma$, chaining (as in Theorem 5.22 of \cite{wainwright2019high}) yields
\begin{align*}
2\E  \sup_{f\in \mathbb{F}_\star\cap \gamma \mathbb{B}_\star} \frac{1}{T} M_T(f) \leq  32  \alpha \sigma_w \sqrt{d_y}+ 64\frac{\sigma_w}{\sqrt{T}} \int_{\alpha/4}^\gamma \sqrt{\log \mathcal{N}(\mathbb{F}_\star,\rho,s)}ds.
\end{align*}
Note that $ \mathcal{N}(\mathbb{F}_\star,\rho,s) = \mathcal{N}(\mathbb{F},\rho,s)$ by translation invariance of the metric $\rho$. The results now follows by terminating the chaining at scale $\alpha$, using (\ref{eq:smallestscale}) to bound that which remains and rescaling $\alpha \leftrightarrow 4\alpha$.
\end{proof}

\subsection{Proof of Theorem~\ref{thm:nonparametricrates}}
\label{subsec:thmnonpara}

Under the hypothesis (\ref{eq:nonparametricregime}) we may use Theorem~\ref{thm:ezpzthm} with $\alpha=0$ to write
\begin{equation}
\begin{aligned}
\label{eq:nonpara1}
&\E \|\hat f(\xi) - f_\star(\xi)\|_2 \\
&\lesssim \sqrt{\frac{8\sigma^2_wp \left(\frac{1}{\gamma}\right)^{q}}{T}+64\frac{\sigma_w } {\sqrt{T}} \int_0^\gamma \sqrt{p \left(\frac{1}{s}\right)^{q}}ds  }+3\delta +\sqrt{2\sigma_T^2(\mathbb{F},P_\star)p \left(\frac{1}{\delta}\right)^{q}}\\
&=\sqrt{\frac{8\sigma^2_wp \left(\frac{1}{\gamma}\right)^{q}}{T}+64\frac{\sigma_w } {\sqrt{T}} \sqrt{p}\frac{2}{2-q} \left(\frac{1}{\gamma} \right)^{1-q/2}  }+3\delta +\sqrt{2\sigma_T^2(\mathbb{F},P_\star)p \left(\frac{1}{\delta}\right)^{q}}.
\end{aligned}
\end{equation}
Choosing $\gamma$ and $\delta$ to satisfy the optimal balance: $\gamma \asymp \left(\frac{\sigma^2_w p }{T}\right)^{\frac{1}{2+q}}$ and $\delta \asymp \left( p\sigma_T^2(\mathbb{F},P_\star)\right)^{\frac{1}{2+q}}$ (\ref{eq:nonpara1}) becomes
\begin{equation*}
\begin{aligned}
&\E \|\hat f(\xi) - f_\star(\xi)\|_2  \\
&\lesssim\sqrt{\frac{2}{2-q}\frac{\sigma^2_wp  \left(\frac{\sigma^2_w p }{T}\right)^{\frac{-q}{2+q}}}{T}}+ \left(p\sigma_T^2(\mathbb{F},P_\star)\right)^{\frac{1}{2+q}}\\
&\asymp \sqrt{\frac{2}{2-q}}\left(\frac{\sigma_w^2 p}{T}\right)^{\frac{1}{2+q}}+ \left(p\sigma_T^2(\mathbb{F},P_\star)\right)^{\frac{1}{2+q}}.
\end{aligned}
\end{equation*}
This verifies the claim. \hfill $\blacksquare$

\paragraph{Remark:}
If instead $q> 2$, we have
\begin{align*}
\E\|\hat f(\xi)-f_\star(\xi)\|_2 \lesssim\sigma_w \sqrt{d_y}\left(\frac{1}{q-2} \right)^{1/q}\left(\frac{ p}{d_y T}\right)^{\frac{1}{2q}}+ \left(p\sigma_T^2(\mathbb{F},P_\star)\right)^{\frac{1}{2+q}}
\end{align*}
and similarly but with an extra logarithmic factor at $q=2$. To show this, we again use Theorem~\ref{thm:ezpzthm} but with $\gamma$ chosen sufficiently large such that $\log \mathcal{N}(\mathbb{F}_\star,\rho,\gamma)\asymp 1$. In this case 
\begin{equation*}
\begin{aligned}
&\E \|\hat f(\xi) - f_\star(\xi)\|_2 \\
 &\lesssim \sqrt{\alpha\sigma_w\sqrt{d_y}+\frac{\sigma_w \sqrt{p} } {\sqrt{T}} \left(\frac{2}{q-2} \right)\alpha ^{1-q/2} }+\delta +\sqrt{2\sigma_T^2(\mathbb{F},P_\star)\log \mathcal{N}(\mathbb{F},\rho,\delta)}.
\end{aligned}
\end{equation*}
The claim follows by solving for the optimal balance \begin{align*}
\alpha \asymp  \left(\frac{2}{q-2} \right)^{1/q}\left(\frac{ p}{d_y T}\right)^{\frac{1}{2q}}
\end{align*}
 and $\delta \asymp \left(p\sigma_T^2(\mathbb{F},P_\star)\right)^{\frac{1}{2+q}}$. 

\subsection{Proof of Theorem~\ref{thm:parametricrates}}
\label{subsec:thmpara}

By virtue of Theorem~\ref{thm:ezpzthm} and by selecting $\gamma=\alpha$ we have
\begin{equation}
\begin{aligned}
\label{eq:pararateest}
\E \|\hat f(\xi)-f_\star(\xi)\|_2 &\leq\sqrt{\frac{8\sigma^2_w\log \mathcal{N}(\mathbb{F}_\star,\rho,\alpha)}{T}+128\alpha\sigma_w\sqrt{d_y}  }\\&+3\delta +\sqrt{2\sigma_T^2(\mathbb{F},P_\star)\log \mathcal{N}(\mathbb{F},\rho,\delta)}.
\end{aligned}
\end{equation}
Let now $\alpha = 1/\sqrt{d_y} \sigma_w T^2$ and $\delta = 1/T$. Then (\ref{eq:pararateest}) by using the hypothesis (\ref{eq:parametricregime}) becomes
\begin{equation*}
\begin{aligned}
\E \|\hat f(\xi)-f_\star(\xi)\|_2 &\lesssim \sqrt{\frac{\sigma^2_w p \log (1 + c\sqrt{d_y} \sigma_w T^2 )}{T}+\frac{1}{T^2}}\\&+\frac{1}{T} +\sqrt{2\sigma_T^2(\mathbb{F},P_\star)p\log (1 + c T )}.
\end{aligned}
\end{equation*}
The result follows. \hfill $\blacksquare$

\subsection{Proof of Auxilliary Results}

\paragraph{Proof of Lemma~\ref{lemma:basicineq}}
By optimality of $\hat f$ to the prediction error objective we have that
\begin{align*}
 \sum_{t=0}^{T-1} \| \hat f(x_t) - y_t\|_2^2 \leq \sum_{t=0}^{T-1} \| f_\star(x_t)-y_t\|_2^2.
\end{align*}
Rearranging and expanding the square gives the basic inequality
\begin{align*}
\frac{1}{T} \sum_{t=0}^{T-1} \| \hat f(x_t) - f_\star(x_t)\|_2^2  \leq\frac{2}{T}\sum_{t=0}^{T-1}\langle w_t, \hat f(x_t)-f_\star(x_t)\rangle
\end{align*}
which after multiplying both sides by $2$ can be rearranged again to give 
\begin{align*}
\frac{1}{T} \sum_{t=0}^{T-1} \| \hat f(x_t) - f_\star(x_t)\|_2^2  \leq\frac{1}{T}\sum_{t=0}^{T-1}4\langle w_t, \hat f(x_t)-f_\star(x_t)\rangle-\| \hat f(x_t) - f_\star(x_t)\|_2^2
\end{align*}
so that the result follows by taking the supremum over the variable $\hat f-f_\star\in \mathbb{F}_\star$. \hfill $\blacksquare$

\paragraph{Proof of Lemma~\ref{lemma:selfnormmetent}}
The proof is a straight-forward modification of the standard proof for bounding the expected supremum of subgaussian maxima. By Jensen's inequality and monotonicity of the exponential it follows that
\begin{align*}
\exp  \left(   \lambda \E \sup_{f\in S} M_T(f) \right) & \leq \E\exp  \left(   \lambda  \max_{f\in S} M_T(f) \right)\\
&\leq \E  \max_{f\in S} \exp  \left(   \lambda  M_T(f) \right)\\
&\leq \sum_{f\in S}  \E  \exp  \left(   \lambda  M_T(f) \right).
\end{align*}
Choosing $\lambda = 1/2\sigma_w^2$, application of Lemma~\ref{lemma:MTscale} yields $\exp  \left(   \frac{1}{2\sigma^2} \E \sup_{f\in S} M_T(f) \right) \leq |S|$ which is equivalent to the result. \hfill $\blacksquare$

\paragraph{Proof of Lemma~\ref{lemma:MTscale}}
Write by the tower property
\begin{align*}
&\E\exp \left(\lambda \sum_{t=1}^{T-1}4\langle w_t, f(x_t)\rangle -\|f(x_t)\|_2^2\right)\\
&=\E\exp \left(\lambda \sum_{t=1}^{T-2}4\langle w_t, f(x_t)\rangle -\|f(x_t)\|_2^2\right) \exp\left(-\lambda\|f(x_{T-1})\|_2^2\right)\E_{T-2}\exp \left(\lambda 4\langle w_{T-1}, f(x_{T-1})\rangle \right)\\
&\leq \E\exp \left(\lambda \sum_{t=1}^{T-2}4\langle w_t, f(x_t)\rangle -\|f(x_t)\|_2^2\right) \exp\left(\left[2\lambda^2\sigma_w^2-\lambda\right]\|f(x_{T-1})\|_2^2\right)\\
&\leq  \E\exp \left(\lambda \sum_{t=1}^{T-2}4\langle w_t, f(x_t)\rangle -\|f(x_t)\|_2^2\right)\\
&\leq \dots \leq 1
\end{align*}
as per requirement. \hfill $\blacksquare$

\begin{lemma}
\label{lemma:rootmetent}
Let $S$ be a finite subset of the shifted metric space $\mathbb{F}_\star$ with $\|f\|_\infty \leq r$ for all $f\in S$. Then
\begin{align}
\label{eq:naivesubgMT}
\E \sup_{f\in S} M_T(f) \leq \sqrt{2T\sigma_w^2r^2\log |S|}. 
\end{align}
\end{lemma}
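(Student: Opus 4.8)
The plan is to discard the negative offset and reduce the claim to a subgaussian maximal inequality over the finite set $S$. Since $\|f(x_t)\|_2^2\geq 0$, dropping the second-order term gives the pointwise bound $M_T(f)\leq \sum_{t=0}^{T-1}4\langle w_t, f(x_t)\rangle =: \tilde M_T(f)$, so it suffices to control $\E\sup_{f\in S}\tilde M_T(f)$. The point of this relaxation is that, in contrast to the self-normalized analysis behind Lemma~\ref{lemma:selfnormmetent}, it lets the supremum-norm constraint $\|f\|_\infty\leq r$ enter directly through the magnitude of the increments, which is exactly what is needed below the critical radius.

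The first technical step is a per-function conditional moment generating function bound. Fix $f\in S$. Because $x_t$ is $\mathcal{F}_{t-1}$-measurable, $f(x_t)$ is a frozen vector given $\mathcal{F}_{t-1}$, and since $w_t$ is conditionally $\sigma_w^2$-subgaussian given $\mathcal{F}_{t-1}$ with $\E[w_t\mid\mathcal{F}_{t-1}]=0$, the increment $4\langle w_t, f(x_t)\rangle$ is a conditionally centered, conditionally subgaussian scalar whose variance proxy is a constant multiple of $\sigma_w^2\|f(x_t)\|_2^2\leq \sigma_w^2 r^2$. Telescoping these conditional bounds through the tower property, exactly as in the proof of Lemma~\ref{lemma:MTscale} but now retaining the scale rather than forcing the offset to cancel it, yields $\E\exp(\lambda\tilde M_T(f))\leq\exp(c\lambda^2 T\sigma_w^2 r^2)$ for all $\lambda\geq 0$ and a numerical constant $c$; that is, $\tilde M_T(f)$ is subgaussian with variance proxy of order $T\sigma_w^2 r^2$, the factor $r^2$ being the crucial gain over the scale-free estimate.

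The second step is the routine finite-maximum argument. By Jensen's inequality and the union bound, $\exp(\lambda\,\E\sup_{f\in S}\tilde M_T(f))\leq \E\max_{f\in S}\exp(\lambda\tilde M_T(f))\leq |S|\exp(c\lambda^2 T\sigma_w^2 r^2)$, so taking logarithms and dividing by $\lambda$ gives $\E\sup_{f\in S}\tilde M_T(f)\leq \tfrac{\log|S|}{\lambda}+c\lambda T\sigma_w^2 r^2$; optimizing over $\lambda>0$ produces a bound of the asserted form $\sqrt{2T\sigma_w^2 r^2\log|S|}$, up to the numerical constant carried over from the MGF estimate. Combining with $M_T(f)\leq\tilde M_T(f)$ finishes the proof.

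I expect the only delicate point to be the conditional MGF/telescoping step: one must use that $f(x_t)$, though random, is $\mathcal{F}_{t-1}$-measurable so that conditioning freezes it and the conditional subgaussianity of $w_t$ applies to the scalar $\langle w_t, f(x_t)\rangle$ with the correct variance proxy, and that the resulting per-step factors multiply cleanly under iterated expectation while the supremum-norm bound $\|f(x_t)\|_2\leq r$ is substituted uniformly in $t$; everything after that is the standard Chernoff-plus-union optimization. As the surrounding discussion indicates, the purpose of this lemma is precisely to furnish the $q\geq 2$ / below-critical-radius counterpart to Lemma~\ref{lemma:selfnormmetent}, where self-normalization is negligible and the $\|f\|_\infty\leq r$ control dominates the chaining.
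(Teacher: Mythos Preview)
Your proposal is correct and follows essentially the same approach as the paper: drop the nonpositive offset, use the tower property together with conditional $\sigma_w^2$-subgaussianity of $w_t$ and the bound $\|f(x_t)\|_2\leq r$ to obtain $\E\exp(\lambda \tilde M_T(f))\leq \exp(cT\lambda^2\sigma_w^2 r^2)$, then apply Jensen plus a union bound over $S$ and optimize in $\lambda$. The only cosmetic difference is that the paper applies Jensen/union to $M_T$ first and drops the offset inside the MGF computation, whereas you drop the offset before the union bound; these are equivalent.
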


\begin{proof}
Fix $\lambda>0$. By Jensen's inequality and monotonicity of the exponential it follows that
 \begin{equation}
\begin{aligned}
\label{eq:rootmetent}
\exp  \left(   \lambda \E \sup_{f\in S} M_T(f) \right) & \leq \E\exp  \left(   \lambda  \max_{f\in S} M_T(f) \right)\\
&\leq \E  \max_{f\in S} \exp  \left(   \lambda  M_T(f) \right)\\
&\leq \sum_{f\in S}  \E  \exp  \left(   \lambda  M_T(f) \right).
\end{aligned}
\end{equation}
Using $\|f\|_\infty \leq r$ and the tower property let us now estimate
\begin{align*}
\E \exp\left(\lambda M_T(f)\right)&\leq \E\exp \left(\lambda \sum_{t=1}^{T-1}4\langle w_t, f(x_t)\rangle\right)\\
&\leq \E\exp \left(\lambda \sum_{t=1}^{T-2}4\langle w_t, f(x_t)\rangle\right)\exp\left( 2\lambda^2 r^2\sigma_w^2 \right)\\
&\leq \dots\\
&\leq \exp\left( 2T \lambda^2 r^2\sigma_w^2 \right).
\end{align*}
Hence after applying logarithms to both sides of equation (\ref{eq:rootmetent}), we find
\begin{align*}
    \E \sup_{f\in S} M_T(f) \leq \frac{1}{\lambda}\left(\log |S|\right) +\lambda \left(  2T r^2\sigma_w^2 \right)
\end{align*}
which yields the result after optimizing over $\lambda>0$.
\end{proof}

\subsection{Finite Classes}
\label{sec:finiteclasses}
The generalization bound of Proposition~\ref{genpredprop} in combination with the control of the subgaussian parameter $\sigma_T^2(\mathbb{F},P_\star)$ Proposition~\ref{prop:cig} affords us, together with Lemmas~\ref{lemma:basicineq} and \ref{lemma:selfnormmetent} immediately yields Theorem~\ref{thm:finiteclassthm} below. 

\begin{theorem}
\label{thm:finitethm}
Assume that $\mathbb{F}$ has finite cardinality. Then under the assumptions of Proposition~\ref{prop:cig} it holds that
\label{thm:finiteclassthm}
\begin{align}
\label{eq:finiteclasscoreeq}
\E\| \hat f(\xi) - f(\xi)\|_2 \lesssim \sqrt{\frac{\sigma_w^2}{T}  \log (|\mathbb{F}|)}+\sqrt{\frac{M^2B^2L^2}{m^2(1-L_\star)^2T} \log (|\mathbb{F}|)}.
\end{align}

\end{theorem}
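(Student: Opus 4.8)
The plan is to exploit that once $\mathbb{F}$ is finite, no discretization or chaining is required, so the argument behind Theorem~\ref{thm:ezpzthm} collapses to a direct application of the decoupling estimate. First I would invoke Proposition~\ref{genpredprop} with the choices $f = \hat f$ (the LSE itself, rather than its quantization) and $g = f_\star$, which yields
\[
\E \|\hat f(\xi) - f_\star(\xi)\|_2 \leq \sqrt{\frac{1}{T}\E \sum_{t=0}^{T-1}\|\hat f(x_t) - f_\star(x_t)\|_2^2} + \sqrt{2\sigma_T^2(\mathbb{F},P_\star)\, I(\hat f; Z)}.
\]
The key simplification relative to the general proof is that I may work with $\hat f$ directly: since $\hat f$ takes values in the finite set $\mathbb{F}$, the entropy bound $I(\hat f; Z) \leq H(\hat f) \leq \log |\mathbb{F}|$ controls the generalization term without ever introducing the covering $\mathbb{F}_\delta$, which corresponds to setting $\delta = 0$ in inequality (\ref{eq:decouplingused}).

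Next I would bound the in-sample (training) term. Lemma~\ref{lemma:basicineq} supplies the offset basic inequality
\[
\frac{1}{T}\sum_{t=0}^{T-1}\|\hat f(x_t) - f_\star(x_t)\|_2^2 \leq \frac{1}{T}\sup_{f \in \mathbb{F}_\star} M_T(f),
\]
and taking expectations and applying the finite-class maximal inequality of Lemma~\ref{lemma:selfnormmetent} with $S = \mathbb{F}_\star$ (so that $|S| = |\mathbb{F}|$) gives
\[
\E \frac{1}{T}\sum_{t=0}^{T-1}\|\hat f(x_t) - f_\star(x_t)\|_2^2 \leq \frac{2\sigma_w^2 \log |\mathbb{F}|}{T}.
\]
This is precisely where self-normalization pays off: the offset term $-\|f(x_t)\|_2^2$ delivers the fast $1/T$ rate rather than the $1/\sqrt{T}$ scaling that an un-offset subgaussian maximum would incur.

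Finally I would substitute the stability bound on the variance proxy. Under the hypotheses of Proposition~\ref{prop:cig} (contractivity of $f_\star$ together with the Lipschitz and boundedness conditions), that proposition furnishes $\sigma_T^2(\mathbb{F},P_\star) \leq 64\, M^2 B^2 L^2 / (m^2 (1-L_\star)^2 T)$. Combining the three displays and collecting constants yields
\[
\E\|\hat f(\xi) - f_\star(\xi)\|_2 \lesssim \sqrt{\frac{\sigma_w^2 \log |\mathbb{F}|}{T}} + \sqrt{\frac{M^2 B^2 L^2}{m^2 (1-L_\star)^2 T}\log |\mathbb{F}|},
\]
which is the claim. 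There is no substantive obstacle: the theorem is essentially a corollary assembled from Propositions~\ref{genpredprop} and \ref{prop:cig} together with Lemmas~\ref{lemma:basicineq} and \ref{lemma:selfnormmetent}. The only point requiring care is justifying that one may use $\hat f$ in place of $\hat f_\delta$ in the mutual information term, which is exactly the finiteness of $\mathbb{F}$ entering through $I(\hat f; Z) \leq \log|\mathbb{F}|$.
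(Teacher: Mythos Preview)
Your proposal is correct and matches the paper's own argument essentially verbatim: the paper states that Theorem~\ref{thm:finiteclassthm} follows ``immediately'' from combining Proposition~\ref{genpredprop}, Proposition~\ref{prop:cig}, and Lemmas~\ref{lemma:basicineq} and \ref{lemma:selfnormmetent}, which is precisely what you have assembled. Your explicit observation that finiteness of $\mathbb{F}$ permits setting $\delta = 0$ (i.e.\ taking $\hat f_\delta = \hat f$) and invoking $I(\hat f;Z)\leq H(\hat f)\leq \log|\mathbb{F}|$ is the only nontrivial point, and you have handled it correctly.
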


\section{Proofs Related to Stability and Learning}
\label{sec:stabproofs}

Let us now prove that contraction in an arbitrary norm $\|\cdot\|_\mathsf{X}$ implies that the subgaussian parameter $\sigma_T^2(\mathbb{F},P_\star)$ decays gracefully with time, $T$. We remind the reader that the idea is to combine a Azuma-McDiarmid-Hoeffding style of analysis with a stability argument. We now procede with this program.

\subsection{Proof of Proposition~\ref{prop:cig}}
Fix two functions $f,g \in \mathbb{F}$ and denote $h=f-g$.  Define also the function $F: \mathbb{R}^{d_x T} \to \mathbb{R}$ by
\begin{align*}
    F(\zeta_0,\dots,\zeta_{T-1})\triangleq\frac{1}{T}\sum_{t=0}^{T-1} \|h(\zeta_t)\|_2
\end{align*}
where the dummy variables $\zeta_t$ are elements of $\mathbb{R}^{d_x}$. Let also $\E_t$ denote conditional expectation with respect to $\mathcal{F}_t$ and define the Doob martingale difference sequence 
\begin{align*}
    \Delta_s \triangleq \E_s F(x_0,\dots,x_{T-1})-\E_{s-1} F(x_0,\dots,x_{T-1})
\end{align*}
with the convention $\E_{t-1} = \E$. Note now that $F(x_0,\dots,x_{T-1})-\E F(x_0,\dots,x_{T-1})=\sum_{t=0}^{T-1}\Delta_t $ so to arrive at the desired conclusion we need to prove that the $\Delta_s$ are uniformly bounded.

To this end, for a fixed $s$, we define two couplings of $(x_s,\dots,x_{T-1})$ via
\begin{align*}
    z_{t+1} &= f_\star (z_{t}), + w_t & z_s= z,& & t=s,\dots,T-1,\\
    z'_{t+1} &= f_\star (z'_{t}), + w_t & z'_s= z',& & t=s,\dots,T-1.
\end{align*}
which vary only in their initial condition $z,z'$ but are constructed with the same sequence $(w_s,\dots,w_{T-1})$. We now compute
\begin{equation}
\begin{aligned}
\label{eq:redtocont2}
    \Delta_s    &= \E_s F(x_0,\dots,x_{T-1})-\E_{s-1} F(x_0,\dots,x_{T-1})\\
                &= \E_s\frac{1}{T}\sum_{t=0}^{T-1} \|h(x_t)\|_2-\E_{s-1}\frac{1}{T}\sum_{t=0}^{T-1} \|h(x_t)\|_2\\
                &=\frac{1}{T}\E_s\sum_{t=s}^{T-1} \|h(x_t)\|_2-\frac{1}{T}\E_{s-1}\sum_{t=s}^{T-1}\|h(x_t)\|_2\\
                &\leq \sup_{z,z'} \frac{1}{T}\sum_{t=s}^{T-1} \|h(z_t)\|_2-\|h(z'_t)\|_2\\
                &\leq  \sup_{z,z'} \frac{1}{T}\sum_{t=s}^{T-1} \left|\|h(z_t)\|_2-\|h(z'_t)\|_2\right|\\
                &\leq  \sup_{z,z'} \frac{1}{T}\sum_{t=s}^{T-1} \|h(z_t)-h(z'_t)\|_2\\
                &\leq  \sup_{z,z'} \frac{2L}{T}\sum_{t=s}^{T-1} \|z_t-z'_t\|_2
\end{aligned}
\end{equation}
where the first inequality uses the Markov property to realize the conditional expectations as functions of $x_s$ and $x_{s-1}$ respectively. The other inequalities follow by application of the triangle inequality and the $2L$-Lipschitzness of $h$.

Let us now bound the $\|\cdot\|_{\mathsf{X}}$-distance between $z_t$ and $z_t'$:
\begin{equation}
\begin{aligned}
\label{eq:conttodiarmid2}
    \|z_t-z_t'\|_{\mathsf{X}}  
    &= \|f_\star(z_{t-1})+w_{t-1} -f_\star(z'_{t-1})+w_{t-1}\|_{\mathsf{X}}\\
    &\leq L_\star \|z_{t-1}-z_{t-1}'\|_{\mathsf{X}}\\
    &\dots\\
    &\leq L_\star^{t-s-1}\|z'-z\|_{\mathsf{X}}.
\end{aligned}
\end{equation}
Combining equations (\ref{eq:redtocont2}) and (\ref{eq:conttodiarmid2}), and noting that a symmetric argument applies to $-\Delta_s$ it follows that
\begin{align}
\label{eq:theboundondelta2}
    |\Delta_s| \leq    \frac{4MLB}{m(1-L_\star)T}.
\end{align}
Expressing $F$ as a telescoping sum over $\Delta_s$, we can compute its moment generating function in combination with the tower property:
\begin{align*}
&\E \exp \left( \lambda\left[ F(w_0,\dots,w_{T-1}) - \E F(w_0,\dots,w_{T-1})\right]  \right)\\
&= \E \exp \left( \lambda \sum_{j=0}^{T-1} \Delta_j\right)\\
&= \E  \exp \left( \lambda \sum_{j=0}^{T-2} \Delta_j\right)\E_{T-2}\exp \left(\lambda \Delta_{T-1} \right)\\
&\leq \E  \exp \left( \lambda \sum_{j=0}^{T-2} \Delta_j\right) \exp\left(2\lambda^2\left(  \frac{4MLB}{m(1-L_\star)T}\right)^2 \right)\\
&\leq\dots\leq  \exp\ \left( \frac{32\lambda^2M^2L^2B^2}{m^2(1-L_\star)^2T} \right)
\end{align*}
using Hoeffding's inequality to bound the conditional moment generating functions of the bounded random variables $\Delta_j$ using the inequality (\ref{eq:theboundondelta2}) (see \cite{hoeffding1963probability} or Example 2.4. in \cite{wainwright2019high}). \hfill $\blacksquare$

\subsection{Extension to Exponential Incremental Input-to-State Stability}
\label{sec:ext2ISS}
In the main text we described how contraction properties of $f_\star$ lead to bounds on the subgaussian parameter $\sigma_T^2(\mathbb{F},P_\star)$. We now show  that another control-theoretic notion of stability, known as Exponential Incremental Input-to-State Stability  (E-$\delta$ISS)  is also amenable to this analysis. The E-$\delta$ISS framework was introduced by \cite{angeli2002lyapunov}. 
Let us fix two metric spaces $(\mathsf{X},\rho_{\mathsf{X}})$ and $(\mathsf{W},\rho_{\mathsf{W}})$. A family of functions $\{G_t\}_{t\in \mathbb{Z}_\geq 0}$, $G_t: \mathsf{X} \times \mathsf{W} \to \mathsf{X}$ is $(a,b,r)$-E-$\delta$ISS if for each $T \in \mathbb{N}$, every pair of sequences $\{\eta_t\}$ and $\{\zeta_t\}$, and system of equations satisfying
\begin{align*}
    \phi_{t+1} &= G_t(\phi_t,\eta_t)  & & t=0,\dots,T-1 \\
    \psi_{t+1} &= G_t(\psi_t,\zeta_t) & & t=0,\dots,T-1
\end{align*}
with $\phi_t,\psi_t \in \mathsf{X},\eta_t,\zeta_t \in \mathsf{W}$ it holds for all $t\in [T]$ that
\begin{align}
\label{eq:EISSineq}
    \rho_{\mathsf{X}}(\phi_t,\psi_t)\leq ar^t\rho_{\mathsf{X}}(\phi_0,\psi_0)+b \sum_{k=0}^{t-1}r^{t-k-1}\rho_{\mathsf{W}}(\eta_k,\zeta_k).
\end{align}

\begin{proposition}
\label{issprop}
Fix a sequence $\{\eta_t\}$ of i.i.d. random variables, and assume that $\mathsf{W}$ is bounded, $\sup_{w,w'\in\mathsf{W}}\rho_{\mathsf{W}}(w,w')\leq B$. Suppose that $\{G_t\}_{t\in \mathbb{Z}_\geq 0}$ is $(a,b,r)$-E-$\delta$ISS and consider the process
\begin{align}
\label{eq:gennonlineards}
    x_{t+1} &= G_t(x_t,\eta_t)  & & t=0,\dots,T-1.
\end{align}
Then for every function $l: \mathsf{X} \to \mathbb{R}_{\geq 0}$, $L$-Lipschitz  with respect to $\rho_{\mathsf{X}}$:
\begin{align*}
    |l(x)-l(x')| \leq L \rho_{\mathsf{X}}(x,x'), \forall x,x'\in \mathsf{X}
\end{align*}
we have
\begin{equation*}
   \E \exp\left(\lambda \left[\frac{1}{T}\sum_{t=0}^{T-1}l(x_t)- \E\frac{1}{T}\sum_{t=0}^{T-1}l (x_t)  \right] \right)    \leq \exp\ \left( \frac{8\lambda^2L^2B^2b^2}{(1-r)^2T} \right).
\end{equation*}
\end{proposition}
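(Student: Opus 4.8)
The plan is to mimic the proof of Proposition~\ref{prop:cig} almost verbatim, replacing the contraction bound (\ref{eq:conttodiarmid2}) by the E-$\delta$ISS inequality (\ref{eq:EISSineq}). First I would fix the function $l$ and define $F(\zeta_0,\dots,\zeta_{T-1}) = \frac{1}{T}\sum_{t=0}^{T-1} l(\zeta_t)$, and then set up the Doob martingale difference sequence $\Delta_s = \E_s F(x_0,\dots,x_{T-1}) - \E_{s-1} F(x_0,\dots,x_{T-1})$ relative to the natural filtration generated by $\{\eta_t\}$. Since the $\eta_t$ are i.i.d., the process (\ref{eq:gennonlineards}) is Markov, so as in (\ref{eq:redtocont2}) each $\Delta_s$ can be realized by coupling two trajectories $\{\phi_t\}$ and $\{\psi_t\}$ that differ only in their value at time $s$ (driven by the same noise sequence thereafter). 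Using $L$-Lipschitzness of $l$ in place of the $2L$-Lipschitzness of $h$, I would bound $|\Delta_s| \leq \sup_{\phi_s,\psi_s} \frac{L}{T}\sum_{t=s}^{T-1} \rho_{\mathsf{X}}(\phi_t,\psi_t)$.

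The key step is then to invoke the E-$\delta$ISS bound (\ref{eq:EISSineq}) to control $\rho_{\mathsf{X}}(\phi_t,\psi_t)$. Because both trajectories share the same driving sequence from time $s$ onward (i.e.\ $\eta_k = \zeta_k$ for $k \geq s$), the input-dependent summation term in (\ref{eq:EISSineq}) vanishes, leaving only the decaying initial-condition term $\rho_{\mathsf{X}}(\phi_t,\psi_t) \leq a r^{t-s}\rho_{\mathsf{X}}(\phi_s,\psi_s)$. Here the boundedness of $\mathsf{W}$ is used slightly differently than one might expect: the relevant deviation is in the state, not the input, so I would bound $\rho_{\mathsf{X}}(\phi_s,\psi_s)$ by a constant arising from the bounded support $B$ propagated through one step. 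Summing the geometric series $\sum_{t=s}^{T-1} a r^{t-s} \leq a/(1-r)$ yields a uniform bound $|\Delta_s| \lesssim \frac{LaB'}{(1-r)T}$ for an appropriate $B'$ expressed through $b$ and $B$.

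The final step is the standard Azuma--Hoeffding moment-generating-function computation: writing $F - \E F = \sum_{s} \Delta_s$ as a telescoping martingale sum, I would apply Hoeffding's lemma to each conditional MGF $\E_{s-1}\exp(\lambda \Delta_s) \leq \exp\!\bigl(\lambda^2 (2\cdot\text{(bound)})^2/2\bigr)$ and peel off the terms one at a time via the tower property, exactly as in the display preceding (\ref{eq:theboundondelta2}) in the proof of Proposition~\ref{prop:cig}. Accumulating $T$ such factors gives an exponent proportional to $\lambda^2 L^2 B^2 b^2 / ((1-r)^2 T)$, matching the claimed subgaussian bound up to the stated numerical constant.

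The main obstacle I anticipate is the bookkeeping in extracting the correct constant, specifically reconciling the roles of $a$, $b$, and $B$: the coupling argument measures the \emph{state} deviation at the branching time $s$, whereas the E-$\delta$ISS definition most naturally quantifies \emph{input} perturbations through the factor $b$ and the bound $B$. Care is needed to argue that a single-step input discrepancy of size at most $B$ produces a state discrepancy that the incremental term in (\ref{eq:EISSineq}) absorbs, so that the factor $b$ (rather than $a$) governs the final scaling. Verifying that this substitution produces exactly the constant $8$ (rather than the $32$ appearing in Proposition~\ref{prop:cig}) is where the numerical details must be tracked carefully, but the structural argument is otherwise a direct adaptation.
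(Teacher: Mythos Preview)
Your proposal is correct in structure and lands on the same Azuma--McDiarmid--Hoeffding argument the paper uses, but the paper's framing avoids your anticipated obstacle entirely. Rather than coupling two trajectories that differ in their \emph{state} at time $s$ and then invoking the $a$-term of (\ref{eq:EISSineq}), the paper treats $F$ from the outset as a function of the i.i.d.\ noise sequence $(\eta_0,\dots,\eta_{T-1})$, since $x_t$ is deterministic given $x_0$ and the noise. The bounded-differences step is then checked by perturbing a single \emph{input} $\eta_j\to\zeta_j$ with all other inputs and the initial condition held fixed. In that coupling the initial-condition term of (\ref{eq:EISSineq}) vanishes (since $\phi_0=\psi_0$) and only the input term survives, giving $\rho_{\mathsf{X}}(x_t,z_t)\leq b\,r^{t-j-1}\rho_{\mathsf{W}}(\eta_j,\zeta_j)\leq b\,r^{t-j-1}B$ directly. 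Summing the geometric series yields $|\Delta_j|\leq \tfrac{LBb}{(1-r)T}$ with no appearance of $a$ and no need to bound a state-space diameter. Your proposed fix (``trace the state discrepancy back to a single input discrepancy'') is exactly this, but note that if you literally first bound $\rho_{\mathsf{X}}(\phi_s,\psi_s)\leq bB$ and then apply the $a$-term from time $s$, you pick up a spurious factor of $a$; applying (\ref{eq:EISSineq}) once from the time of the input perturbation is what gives the stated constant.
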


By choosing $l=\|f-g\|_2$ every space $\mathbb{F}$ of $L$-Lipschitz functions is $\sigma_T^2(\mathbb{F},P_\star)$-subgaussian with respect to the process (\ref{eq:gennonlineards})  with $\sigma_T^2(\mathbb{F},P_\star)= \frac{64L^2B^2b^2}{(1-r)^2T}$. Note that the constant $M^2/m^2$ appearing in Proposition~\ref{prop:cig} can be subsumed into the constants $a$, $b$ and $L$ by appropriate rescaling of $\rho_{\mathsf{X}}$ and $\rho_{\mathsf{W}}$.

\begin{proof}
As before, the idea is to lean on an Azuma-McDiarmid-Hoeffding style of analysis, but now combined with the bound (\ref{eq:EISSineq}). Fix two functions $f,g \in \mathbb{F}$ and denote $h=f-g$. Consider the function  $F(\eta_0,\dots,\eta_{T-1}) = \frac{1}{T} \sum_{t=0}^{T-1} l(x_t)$, which becomes a function of the random sequence $\{\eta_t\}$ via (\ref{eq:gennonlineards}). We shall show that this function is $ \frac{4MLBb}{m(1-r)T}$-Lipschitz with respect to the Hamming metric. To this end, introduce a coupling of $x_t$ by defining the system $z_t = G(z_{t-1},\zeta_t)$ (and with the same initial condition) and observe that
\begin{equation}
\begin{aligned}
\label{eq:redtocont}
\left| \frac{1}{T}\sum_{t=0}^{T-1} l(x_t)-\frac{1}{T}\sum_{t=0}^{T-1} l(z_t)  \right|&\leq \frac{1}{T}\sum_{t=0}^{T-1} |l(x_t)- l(z_t)| \\
&\leq \frac{L}{T}\sum_{t=0}^{T-1} \rho_{\mathsf{X}}(x_t, z_t)
\end{aligned}
\end{equation}
by repeated application of the triangle inequality and since $l$ is $L$-Lipschitz.

Let us now bound the $\rho_{\mathsf{X}}$-distance between $x_t$ and $z_t$ under the hypothesis that $\eta_t=\zeta_t, \forall t\neq j$. Then we have using the E-$\delta$-ISS bound in equation (\ref{eq:EISSineq}) that
\begin{equation}
\label{eq:conttodiarmid}
\rho_{\mathsf{X}}(x_t,z_t)  \leq b r^{t-j-1} \rho_\mathsf{W}(\eta_j,\zeta_j).
\end{equation}
Thus, combining (\ref{eq:redtocont}) with (\ref{eq:conttodiarmid}) gives
\begin{align}
\label{eq:theboundondelta}
\left| \frac{1}{T}\sum_{t=0}^{T-1} l(x_t)-\frac{1}{T}\sum_{t=0}^{T-1} l(z_t)  \right| \leq \frac{L}{T(1-r)} \rho_{\mathsf{W}}(\eta_j,\zeta_j) \leq  \frac{LBb}{(1-r)T}
\end{align}
by boundedness of $\mathsf{W}$.

We may proceed with the analysis by defining the martingale difference sequence
\begin{align*}
\Delta_j = \E[F(\eta_0,\dots,\eta_{T-1})|\eta_0,\dots,\eta_j]-\E[F(\eta_0,\dots,\eta_{T-1})|\eta_0 \dots \eta_{j-1}]
\end{align*}
which has bounded absolute value by independence of the sequence $\{\eta_t\}$ and (\ref{eq:theboundondelta}). Observe that this allows us to express $F$ as a telescoping sum, which we can readily use to compute the moment generating function in combination with the tower property:
\begin{align*}
&\E \exp \left( \lambda\left[ F(\eta_0,\dots,\eta_{T-1}) - \E F(\eta_0,\dots,\eta_{T-1})\right]  \right)\\
&= \E \exp \left( \lambda \sum_{j=0}^{T-1} \Delta_j\right)\\
&= \E  \exp \left( \lambda \sum_{j=0}^{T-2} \Delta_j\right)\E_{T-2}\exp \left(\lambda \Delta_{T-1} \right)\\
&\leq \E  \exp \left( \lambda \sum_{j=0}^{T-2} \Delta_j\right) \exp\left(2\lambda^2 \left(  \frac{2LBb}{(1-r)T}\right)^2 \right)\\
&\leq\dots\leq  \exp\ \left( \frac{8\lambda^2L^2B^2b^2}{(1-r)^2T} \right)
\end{align*}
using Hoeffding's inequality to bound the conditional moment generating functions of the bounded random variables $\Delta_j$ using (\ref{eq:theboundondelta}) (see \cite{hoeffding1963probability} or Example 2.4. in \cite{wainwright2019high}).
\end{proof}

\subsection{Proof of Proposition~\ref{prop:mig}}

Observe that the function
\begin{align*}
F(z_0,\dots,z_{T-1}) = \frac{1}{T}\sum_{t=0}^{T-1} \| f(z_t)-g(z_t)\|_2  
\end{align*}
is $2B/T$ Hamming-Lipschitz for every fixed choice of $f,g \in \mathbb{F}$. Hence we may apply Corollary 2.10 of \cite{paulin2015concentration} to obtain a high probability tail bound. This is equivalent to the desired statement by Proposition 2.5.2 in \cite{vershynin2018high}. \hfill $\blacksquare$

\section{Proof of the Decoupling Estimate, Proposition~\ref{genpredprop}}
\label{sec:proofofgenpred}
In what follows, we  compare probability integrals under different distributions. More precisely, we wish to relate the joint distribution of the least squares estimator (\ref{df:PEM}) and the samples from the system (\ref{eq:ds}) with the product measure of their marginals. The following variational formulation of $D(P\|Q)$, due to \cite{donsker1975asymptotic}, is key:

\begin{lemma}
\label{lemma:dvlemma}
Fix two probability measures $\mathbf{P}$ and $\mathbf{Q}$ on a measure space $(\Omega,\mathcal{F})$. Then for every $\mathcal{F}$-measurable $F:\Omega \to \mathbb{R}$ such that $\int e^F d\mathbf{Q}$ is finite, it holds that
\begin{align}
\label{DV}
\int F d\mathbf{P} -\log \int e^F d\mathbf{Q} \leq D (\mathbf{P}\|\mathbf{Q}).
\end{align}
Moreover, if $ D (\mathbf{P}\|\mathbf{Q})<\infty$, then equality in (\ref{DV}) is attained at $F=\log \frac{d\mathbf{P}}{d\mathbf{Q}}$.
\end{lemma}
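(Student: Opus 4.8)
The plan is to prove the inequality by exploiting the nonnegativity of the Kullback-Leibler divergence against a suitably tilted (Gibbs) measure, and then to verify the equality case by direct substitution. First I would dispose of the degenerate case: if $\mathbf{P}$ is not absolutely continuous with respect to $\mathbf{Q}$, then $D(\mathbf{P}\|\mathbf{Q}) = +\infty$ and the bound holds trivially, the left-hand side being finite under the hypothesis $\int e^F \, d\mathbf{Q} < \infty$. Thus I may assume $\mathbf{P} \ll \mathbf{Q}$, so that the Radon-Nikodym derivative $\frac{d\mathbf{P}}{d\mathbf{Q}}$ exists.

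The central construction is the tilted probability measure $\mathbf{Q}_F$ defined through
\[
\frac{d\mathbf{Q}_F}{d\mathbf{Q}} = \frac{e^F}{\int e^F \, d\mathbf{Q}},
\]
which is a genuine probability measure precisely because $Z_F := \int e^F \, d\mathbf{Q}$ is finite and strictly positive (note $e^F > 0$ everywhere, so $\mathbf{Q}_F \sim \mathbf{Q}$ and hence $\mathbf{P} \ll \mathbf{Q}_F$). The chain rule for Radon-Nikodym derivatives then gives
\[
\log \frac{d\mathbf{P}}{d\mathbf{Q}_F} = \log \frac{d\mathbf{P}}{d\mathbf{Q}} - F + \log Z_F.
\]
Integrating this identity against $\mathbf{P}$ and invoking $D(\mathbf{P}\|\mathbf{Q}_F) \geq 0$ (nonnegativity of relative entropy) yields
\[
0 \leq D(\mathbf{P}\|\mathbf{Q}_F) = D(\mathbf{P}\|\mathbf{Q}) - \int F \, d\mathbf{P} + \log Z_F,
\]
and rearranging produces exactly inequality (\ref{DV}).

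For the equality claim, when $D(\mathbf{P}\|\mathbf{Q}) < \infty$ I would substitute $F = \log \frac{d\mathbf{P}}{d\mathbf{Q}}$, so that $e^F = \frac{d\mathbf{P}}{d\mathbf{Q}}$ and hence $Z_F = \int \frac{d\mathbf{P}}{d\mathbf{Q}} \, d\mathbf{Q} = \mathbf{P}(\Omega) = 1$, giving $\log Z_F = 0$. Since $\int F \, d\mathbf{P} = \int \log \frac{d\mathbf{P}}{d\mathbf{Q}} \, d\mathbf{P} = D(\mathbf{P}\|\mathbf{Q})$, the left-hand side of (\ref{DV}) equals $D(\mathbf{P}\|\mathbf{Q})$, attaining the bound.

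The main obstacle is measure-theoretic bookkeeping rather than conceptual: one must ensure every integral is well defined. In particular $\int F \, d\mathbf{P}$ could a priori be $+\infty$, in which case the rearrangement above must be justified through the fact that $D(\mathbf{P}\|\mathbf{Q}_F)$ is nonnegative even when infinite; and the substitution in the equality case tacitly uses $D(\mathbf{P}\|\mathbf{Q}) < \infty$ to guarantee that $\int F\, d\mathbf{P}$ and $Z_F$ are finite. The core identity is a one-line chain-rule computation, so it is precisely the handling of these integrability edge cases where the care lies.
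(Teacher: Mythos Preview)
Your proof is correct and follows the standard Gibbs-tilting argument. Note, however, that the paper does not actually prove this lemma: it is stated with attribution to \cite{donsker1975asymptotic} and used as a black box in the proof of Proposition~\ref{genpredprop}, so there is no paper proof to compare against. Your argument via the tilted measure $\mathbf{Q}_F$ and nonnegativity of $D(\mathbf{P}\|\mathbf{Q}_F)$ is the canonical one, and your remarks about the integrability edge cases (in particular when $\int F\, d\mathbf{P}$ may be infinite) are the right places to be careful.
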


Equipped with Lemma~\ref{lemma:dvlemma}, and inspired by the work of \cite{russo2019much} and \cite{NIPS2017_ad71c82b}, we now turn to the proof of Proposition~\ref{genpredprop}. We remark that the first paragraph of the proof is identical to the proof of Lemma 1 in \cite{NIPS2017_ad71c82b}. As it is central to our argument, we reproduce it below.
\paragraph{Proof of Proposition~\ref{genpredprop}}
 We begin by observing that by rescaling $F$ in (\ref{DV}) by $\lambda$, we obtain
\begin{align}
\label{lambdaDV}
\int \lambda F d\mathbf{P} \leq \log \int e^{\lambda F} dQ + D (\mathbf{P}\|\mathbf{Q}).
\end{align}
For any $F$ which is $\sigma^2$-subgaussian under $\mathbf{Q}$, we have that
\begin{align}
\label{subGDV}
\log \int e^{\lambda F} d\mathbf{Q} \leq \int \lambda F d\mathbf{Q} + \frac{\lambda^2 \sigma^2}{2}.
\end{align}
Combining inequalities (\ref{lambdaDV}) and (\ref{subGDV}), we see that
\begin{align*}
\int \lambda F d\mathbf{P} - \int\lambda F d\mathbf{Q} \leq D (\mathbf{P}\|\mathbf{Q}) +  \frac{\lambda^2 \sigma^2}{2}
\end{align*}
which after choice of $\lambda = -\frac{\sqrt{D(P\|Q)}}{\sqrt{2\sigma^2}}$ and rearranging becomes 
\begin{align}
\label{XRDV}
\int Fd\mathbf{Q} \leq \int F d\mathbf{P} + \sqrt{2\sigma^2 D(\mathbf{P}\|\mathbf{Q})}.
\end{align}

We now specialize this known result to our setting. Let us now choose
\begin{align*}
\mathbf{P}&=P_{Z, (f,g)}, & \mathbf{Q}&=P_{Z}\otimes P_{(f,g)},\\
F&=\frac{1}{T} \sum_{t=0}^{T-1} \|f(x_t)-g(x_t) \|_2.
\end{align*}
Observe that for $\mathbf{P},\mathbf{Q}$ as above, $D(\mathbf{P}\|\mathbf{Q}) = I(( f,g) ; Z)$. Let further $(x'_0,\dots,x'_{T-1})$ be equal in distribution to $(x_0,\dots,x_{T-1})$ but independent from $f$ and $g$. In other words $(x_0,\dots,x_{T-1},f,g)$ is drawn from $\mathbf{P}$ and $(x'_0,\dots,x'_{T-1},f,g)$ is drawn from $\mathbf{Q}$. Let also $\tau$ be uniformly distributed over $\{0,\dots,T-1\}$ and independent of all other randomness so that we may take $\xi = x'_\tau$. Hence, for these choices, inequality (\ref{XRDV}) combined with Jensen's inequality  yield
\begin{align*}
    \E \|f(\xi) - g(\xi) \|_2 &=\frac{1}{T} \sum_{t=0}^{T-1}\E \| f(x'_t) - g(x'_t) \|_2\\
    &\overset{\mathrm{(i)}}{\leq}\frac{1}{T} \sum_{t=0}^{T-1}\E \| f(x_t) - g(x_t) \|_2+\sqrt{2\sigma_T^2(\mathbb{F},P_\star) I(f,g; Z)}\\
    &=\E \| f(x_\tau) - g(x_\tau) \|_2+\sqrt{2\sigma_T^2(\mathbb{F},P_\star) I(f,g; Z)}\\
    &= \sqrt{\left(\E \| f(x_\tau) - g(x_\tau) \|_2\right)^2}+\sqrt{2\sigma_T^2(\mathbb{F},P_\star) I(f,g; Z)}\\
    &\overset{\mathrm{(ii)}}{\leq} \sqrt{\E \| f(x_\tau) - g(x_\tau) \|_2^2}+\sqrt{2\sigma_T^2(\mathbb{F},P_\star) I(f,g; Z)}\\
    &=\sqrt{\frac{1}{T} \sum_{t=0}^{T-1}\E \| f(x_t) - g(x_t) \|^2_2}+\sqrt{2\sigma_T^2(\mathbb{F},P_\star) I(f,g; Z)}
\end{align*}
by linearity of expectation and reformulating the mixture component. Inequality (i) follows from inequality (\ref{XRDV}) and inequality (ii) from Jensen's inequality.  \hfill $\blacksquare$


\subsection{Extension: Generalization Bounds for Dynamical Systems}
\label{subsec:genbounds}
It has previously been observed in the context of the generalized linear model (\ref{gls}) that system-theoretic notions are useful to provide learning guarantees, see Section 4 of \cite{foster2020learning} for an interesting discussion. Here, we show that the bounds on $\sigma_T^2(\mathbb{F},P_\star)$ in Propositions~\ref{prop:cig} and \ref{issprop} yield generalization bounds for more general statistical learning. Consider a loss function $l :\mathsf{X} \times \mathsf{Y} \times \mathsf{H} \to \mathbb{R}_{\geq 0}$ and assume that the sequences $(x_0,\dots,x_{T-1})$ and $(y_0,\dots,y_{T-1})$ are generated by E-$\delta$ISS systems (\ref{eq:gennonlineards}), $\{G^x_t\}$ and $\{G^y_t\}$ respectively. Assume that these  are driven by the same i.i.d. noise sequence $(w_0,\dots,w_{T-1})$. If not, we we can always define such a sequence on a space of the form $\mathsf{W}=\mathsf{W}_{\mathsf{X}} \times \mathsf{W}_{\mathsf{Y}}$.

The problem of statistical learning is  to find a hypothesis $h\in \mathsf{H}$ that minimizes 
\begin{align*}
   \E_Z L(Z,h) = \E_Z \frac{1}{T}\sum_{t=0}^{T-1}  l(x_{t+1},x_t,h)
\end{align*}
with $Z=(x_0,\dots,x_{T-1},y_0,\dots,y_{T-1})$ and where $\E_Z$ denotes integration over the randomness in $Z$. Let $H$ be a randomized learning algorithm (a random, data-dependent element of $\mathsf{H}$). We define its generalization error by
\begin{align*}
    \mathsf{gen}(H) = \E [\E_{\bar Z} L(\bar Z,H)- L(Z,H)]
\end{align*}
where $\bar Z$ is equal to $Z$ in distribution but independent of $H$. By combining Lemma~1 of \cite{NIPS2017_ad71c82b} with Proposition~\ref{issprop} we arrive at the following inequality.

\begin{proposition}
Suppose that $l$ is $L$-lipschitz in its first two arguments:
\begin{align*}l|(x,y,h)-l(x',y',h)| \leq  L \rho_{\mathsf{X}}(x,x')+L\rho_{\mathsf{Y}}(y,y'), \forall x,x'\in \mathsf{X},y,y' \in \mathsf{Y}, h\in \mathbb{H},
\end{align*}
that $(x_0,\dots,x_{T-1})$ is $ (a,b,r)$ E-$\delta$ISS and that $(y_0,\dots,y_{T-1})$ is $ (a',b',r')$ E-$\delta$ISS. Then 
\begin{align*}
    |\mathsf{gen}(H)| \leq \sqrt{\frac{64 L^2 B^2 b^2}{(1-r_{\max})^2 T} I(Z;H)}
\end{align*}
where $r_{\max} = \max (r,r')$ and $B = \sup_{w,w'\in\mathsf{W}}\rho_{\mathsf{W}}(w,w')$.
\end{proposition}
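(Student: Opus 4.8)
The plan is to read the result off by combining the information-theoretic generalization bound of \cite{NIPS2017_ad71c82b} with a subgaussianity estimate for the averaged loss produced exactly as in Proposition~\ref{issprop}. Concretely, inequality (\ref{XRDV}) established in the proof of Proposition~\ref{genpredprop} \emph{is} the content of Lemma~1 of \cite{NIPS2017_ad71c82b}: taking $\mathbf{P} = P_{Z,H}$, $\mathbf{Q} = P_Z \otimes P_H$ and $F = L(\cdot, H)$, we have $D(\mathbf{P}\|\mathbf{Q}) = I(Z;H)$, so that \emph{provided} $L(\cdot, h)$ is $\sigma^2$-subgaussian under $P_Z$ for every fixed $h$, applying (\ref{XRDV}) to both $F$ and $-F$ yields $|\mathsf{gen}(H)| \leq \sqrt{2\sigma^2 I(Z;H)}$. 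Everything therefore reduces to establishing subgaussianity of $L(\cdot, h) = \frac{1}{T}\sum_{t=0}^{T-1} l(x_t, y_t, h)$, uniformly in $h$, with parameter $\sigma^2 = 32 L^2 B^2 b^2 / ((1-r_{\max})^2 T)$, so that $\sqrt{2\sigma^2 I(Z;H)}$ reproduces the stated factor $64$.

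For this subgaussianity step I would fix $h$ and observe that, since both $(x_0,\dots,x_{T-1})$ and $(y_0,\dots,y_{T-1})$ are generated by the recursions (\ref{eq:gennonlineards}) driven by the \emph{same} i.i.d.\ sequence $(w_0,\dots,w_{T-1})$, the quantity $L(Z,h)$ is a deterministic function $G(w_0,\dots,w_{T-1})$ of the noise. I would then repeat the Azuma--McDiarmid--Hoeffding argument of Proposition~\ref{issprop}: introduce the Doob martingale differences $\Delta_j = \E[G \mid w_{0:j}] - \E[G \mid w_{0:j-1}]$ and bound each $|\Delta_j|$ through a one-coordinate coupling $w_j \mapsto w_j'$. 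The $L$-Lipschitz hypothesis on $l$ gives, for $t > j$, $|l(x_t,y_t,h) - l(x_t',y_t',h)| \leq L\rho_{\mathsf{X}}(x_t,x_t') + L\rho_{\mathsf{Y}}(y_t,y_t')$, while the two E-$\delta$ISS inequalities (\ref{eq:EISSineq}) control each propagated discrepancy as $\rho_{\mathsf{X}}(x_t,x_t') \leq b\, r^{t-j-1} \rho_{\mathsf{W}}(w_j,w_j')$ and $\rho_{\mathsf{Y}}(y_t,y_t') \leq b'\, (r')^{t-j-1} \rho_{\mathsf{W}}(w_j,w_j')$.

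Summing the two geometric tails from $t=j+1$ to $T-1$, replacing both decay rates by the slower one $r_{\max}$ (and absorbing $b'$ into $b$ in the same spirit as the constant $M^2/m^2$ is absorbed in the remark following Proposition~\ref{issprop}), and using $\rho_{\mathsf{W}}(w_j,w_j') \leq B$, I obtain a bounded-differences constant of order $LBb / ((1-r_{\max})T)$ for $G$ in each of the $T$ coordinates. Feeding this into the telescoping moment-generating-function computation of Proposition~\ref{issprop} --- Hoeffding's lemma on each conditional increment, iterated via the tower property --- delivers subgaussianity of $G$ with the stated parameter $\sigma^2 = 32 L^2 B^2 b^2 / ((1-r_{\max})^2 T)$. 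Substituting this $\sigma^2$ into $|\mathsf{gen}(H)| \leq \sqrt{2\sigma^2 I(Z;H)}$ gives exactly the asserted bound.

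I expect the main obstacle to be the coupling bookkeeping for \emph{two} processes driven by a \emph{shared} noise sequence. Because $\{x_t\}$ and $\{y_t\}$ are not independent --- a single perturbation $w_j \mapsto w_j'$ displaces both trajectories simultaneously --- the two E-$\delta$ISS responses must be added \emph{inside the same} bounded-differences increment rather than treated as separate sources of variation; this fusion is the only genuinely new ingredient relative to Proposition~\ref{issprop}, where a single process is perturbed. A secondary point requiring care is that the subgaussianity must hold \emph{uniformly over} $h$ in order for the Donsker--Varadhan step leading to (\ref{XRDV}) to pull through the $h$-integration against $P_H$; this is immediate here, since the Lipschitz hypothesis on $l$ holds for every $h$ with a Lipschitz constant $L$ that is independent of $h$.
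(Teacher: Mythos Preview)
Your proposal is correct and follows essentially the same strategy as the paper: reduce to Lemma~1 of \cite{NIPS2017_ad71c82b} by establishing, via the Azuma--McDiarmid--Hoeffding argument of Proposition~\ref{issprop}, that $L(\cdot,h)$ is subgaussian with the stated parameter. The only difference is presentational: rather than redoing the bounded-differences computation for two coupled trajectories as you outline, the paper packages $(x_t,y_t)$ into a single augmented process on $\mathsf{X}\times\mathsf{Y}$ equipped with the sum metric $\rho_{\mathsf{X}}+\rho_{\mathsf{Y}}$, verifies this augmented system is $(\max(a,a'),2b,r_{\max})$ E-$\delta$ISS, and then invokes Proposition~\ref{issprop} as a black box --- a shortcut the paper itself flags, noting that your direct route ``is possible'' but omitted for brevity.
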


In principle a direct proof using the methods from Appendix~\ref{sec:stabproofs} is possible. For brevity, we instead show how the result can be reduced to the statement of Proposition~\ref{issprop}.

\begin{proof}
Let $\mathsf{X}' = \mathsf{X}\times \mathsf{Y}$ and define the extended dynamics $\phi^1_t =x_t, \phi^2_t=y_{t}$. Then
\begin{align}
\label{eq:augsys}
    \begin{bmatrix}
    \phi_{t+1}^1\\
    \phi_{t+1}^2
    \end{bmatrix} = 
    \begin{bmatrix}G^x_t(\phi^1_t,w_t) \\ G^y_t(\phi^2_t,w_t)  \end{bmatrix},
\end{align}
or $\phi_{t+1} = G_t(\phi_t,w_t)$ in brief. Since $G^x$ and $G^y$ are both E-$\delta$ISS as system from $(\mathsf{W},\rho_{\mathsf{W}})$  to $(\mathsf{X},\rho_{\mathsf{X}})$ and $(\mathsf{Y},\rho_{\mathsf{Y}})$  respectively, it follows that $G$ is $(\max(a,a'),2b,\max(r,r'))$  E-$\delta$ISS from  $(\mathsf{W},\rho_{\mathsf{W}})$ to $(\mathsf{X'},\rho_{\mathsf{X'}})$ with $\rho_{\mathsf{X}'} =\rho_{\mathsf{X}} + \rho_{\mathsf{Y}} $. Hence, we may apply Proposition~\ref{issprop} to conclude that 
\begin{align*}
    L(Z,h) =  \frac{1}{T}\sum_{t=0}^{T-1}  l(y_{t},x_t,h)
\end{align*}
is $\frac{32L^2B^2b^2}{(1-r_{\max})^2T}$-subgaussian for each fixed $h$ where $r_{\max} = \max(r,r')$. The result follows by applying Lemma~1 of \cite{NIPS2017_ad71c82b}. 
\end{proof}

\section{Supporting Material for the Examples in Section~\ref{sec:appsec}}

\subsection{Metric Entropy Calculations for Reproducing Kernel Hilbert Spaces}
\label{sec:metentcalcrkhs}
In this section we compute the metric entropy of the unit ball of a Reproducing Kernel Hilbert Space (RKHS) of real-valued functions $f : \mathsf{X} \to \mathbb{R}$  subject to an eigenvalue decay condition.

Let us recall some facts about Reproducing Kernel Hilbert Spaces and their embeddings into $L^2(\mathbf{P})$. Assume that $\mathsf{X}$ is a compact subset of $\mathbb{R}^{d_x}$ and let $K :\mathsf{X} \times \mathsf{X}\to \mathbb{R}$ be a continuous positive semidefinite kernel function. Suppose further that the Hilbert-Schmidt norm of $K$ with respect to the probability measure $\mathbf{P}$ on $\mathsf{X}$ is finite: $\int \int K^2(x,z) d\mathbf{P}(x)d\mathbf{P}(z) < \infty$. A consequence of Mercer's Theorem (Theorem 12.20 and Corollary 12.26 of \cite{wainwright2019high}) is that there exists an orthonormal basis of $\{\phi_i\}_{i=1}^\infty$ of $L^2(\mathbf{P})$ and a sequence of nonnegative real numbers $\{\lambda_i\}_{i=1}^\infty$ such that $K(x,z) = \sum_{i=1}^\infty\lambda_i \phi_i(x) \phi_i(z)$. Moreover, the RKHS associated to $K$ is given by
\begin{align*}
\mathbb{H} = \left\{ f = \sum_{i=1}^\infty b_i \phi_i \Big| \{b_i\} \subset l^2(\mathbb{N}), \sum_{i=1}^\infty \frac{b_i^2}{\lambda_i} < \infty \right\}
\end{align*}
with the inner product $\displaystyle\langle f,g \rangle_{\mathbb{H}}  = \sum_{i=1}^\infty \frac{\langle f,\phi_i \rangle \langle g,\phi_i\rangle }{\lambda_i}$, where $\langle\cdot , \cdot \rangle$ is the standard inner product in $L^2(\mathbf{P})$. The unit ball in $\mathbb{H}$ is therefore given by
\begin{align*}
\mathbb{B}_\mathbb{H}=\left\{ f = \sum_{i=1}^\infty b_i \phi_i \Big| \{b_i\} \subset l^2(\mathbb{N}), \sum_{i=1}^\infty \frac{b_i^2}{\lambda_i} \leq  1 \right\}.
\end{align*}
With this background established, we are now poised to compute the metric entropy of $\mathbb{B}$.

\begin{proposition}
Let $\mathsf{X}$ be a compact subset of $\mathbb{R}^{d_x}$ and $K$ be a continuous positive semidefinite kernel function. Assume that $\mathbb{H}$ is a RKHS generated by the kernel $K$, which further satisfies the eigenvalue decay condition $\lambda_j \lesssim j^{-2\alpha}$, for some $\alpha >0$. Assume further that the eigenfunctions $\phi_j$ of $K$ are uniformly bounded; $\sup_{x\in \mathsf{X}} |\phi_j(x)| \leq A$ for all $j =1,2,\dots,$. Then 
\begin{align*}
\log \mathcal{N} (\mathbb{B}_\mathbb{H}, \|\cdot\|_{\infty},\delta) \lesssim A^{\alpha} {\e^{-1/\alpha}}  \log \left(1+ \lambda_1 \frac{A^{1+1/\alpha}}{\delta^{1+1/2\alpha}}\right).
\end{align*}
\end{proposition}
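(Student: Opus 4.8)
The plan is to reduce the supremum-norm covering problem to a covering problem for the coefficient ellipsoid and then discretize coordinatewise, after splitting the Mercer expansion into a low-frequency head and a high-frequency tail. Writing any $f = \sum_i b_i \phi_i \in \mathbb{B}_\mathbb{H}$, the constraint $f \in \mathbb{B}_\mathbb{H}$ reads $\sum_i b_i^2/\lambda_i \le 1$, so in particular $|b_i| \le \sqrt{\lambda_i}$ for every $i$. Since $\sup_x |\phi_i(x)| \le A$, for two functions $f,g$ with coefficients $b,c$ one has the pointwise bound $\|f-g\|_\infty \le A \sum_i |b_i - c_i|$, so it suffices to cover the ellipsoid $E = \{b : \sum_i b_i^2/\lambda_i \le 1\}$ at scale $\delta/A$ in this additive ($\ell^1$) sense. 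I would fix a truncation level $N$, handle the head $\{i \le N\}$ and the tail $\{i > N\}$ separately, and split the error budget $\delta$ into a head-discretization part and a tail-truncation part.

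For the tail I would simply set $b_i = 0$ for $i > N$ and estimate the incurred error. The coordinatewise bound $|b_i|\le\sqrt{\lambda_i}$ is too lossy here, so the point is to exploit the Hilbert-space constraint: by Cauchy--Schwarz, $\sum_{i>N}|b_i| = \sum_{i>N} (|b_i|/\sqrt{\lambda_i})\sqrt{\lambda_i} \le (\sum_{i>N} b_i^2/\lambda_i)^{1/2}(\sum_{i>N}\lambda_i)^{1/2} \le (\sum_{i>N}\lambda_i)^{1/2}$, which is exactly the reproducing-property estimate $\|f^{>N}\|_\infty \le A(\sum_{i>N}\lambda_i)^{1/2}$. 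The decay $\lambda_j \lesssim \lambda_1 j^{-2\alpha}$ with $\alpha > 1/2$ then gives $\sum_{i>N}\lambda_i \lesssim \lambda_1 N^{1-2\alpha}/(2\alpha-1)$, so the tail contributes at most $\lesssim A\sqrt{\lambda_1}\,N^{1/2-\alpha}$ in supremum norm, and I would choose $N$ so that this is of order $\delta$.

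For the head I would place on each coordinate $i \le N$ a uniform grid of spacing $s$ over the admissible interval $[-\sqrt{\lambda_i},\sqrt{\lambda_i}]$, retaining only those coordinates with $\sqrt{\lambda_i} \gtrsim s$ and pinning the rest to $0$. Rounding each $b_i$ to its nearest grid point costs at most $As/2$ per coordinate in supremum norm, so summing over retained coordinates and choosing $s$ appropriately keeps the head error of order $\delta$ as well. The logarithm of the cardinality of the resulting product cover is $\sum_{i \le N} \log(1 + 2\sqrt{\lambda_i}/s)$, which, using $\lambda_i \lesssim \lambda_1 i^{-2\alpha}$, is controlled by the number of retained coordinates times the largest per-coordinate factor $\log(1 + 2\sqrt{\lambda_1}/s)$; this is how the top eigenvalue $\lambda_1$ enters the argument of the logarithm. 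Assembling the head count with the tail truncation and expressing $N$ and $s$ in terms of $\delta$, $A$ and $\alpha$ then produces a bound of order $\delta^{-1/\alpha}$ (up to the stated powers of $A$) times the logarithmic factor $\log(1 + \lambda_1 A^{1+1/\alpha}\delta^{-(1+1/2\alpha)})$ claimed in the proposition.

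The delicate part, and the step I expect to be the main obstacle, is the joint optimization of the truncation level $N$ and the spacing $s$ against the fixed budget $\delta$: because $\sup_x|\phi_i(x)| \le A$ yields only additive control, the tail cannot be discretized coordinatewise and must instead be suppressed entirely through the Cauchy--Schwarz/Hilbert-norm estimate, while the head is controlled additively. Making the exponents in both the leading polynomial factor and inside the logarithm come out in the advertised form requires carefully tracking how the count of retained coordinates scales as the head and tail error contributions are balanced, and this balancing, together with the bookkeeping of the constants $A$ and $\lambda_1$, is where essentially all of the work lies.
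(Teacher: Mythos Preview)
Your overall architecture matches the paper's: split the Mercer expansion at a truncation level, control the tail via the ellipsoid constraint, and cover the finite-dimensional head in a metric dominating $\|\cdot\|_\infty$ through the inequality $\|g\|_\infty \le A\|b\|_{\ell^1}$. The paper covers the head differently from you---rather than a coordinatewise product grid, it invokes $\|\cdot\|_{\ell^1(\mathbb{R}^m)} \le \sqrt{m}\,\|\cdot\|_{\ell^2(\mathbb{R}^m)}$, observes that the coefficient ellipsoid $B_m$ sits inside the Euclidean ball of radius $\sqrt{\lambda_1}$, and applies the standard volumetric bound to get at most $(1+2A\lambda_1\sqrt{m}/\delta)^m$ points---but either head argument would be adequate.

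The substantive gap is the tail. Your Cauchy--Schwarz estimate gives $\|f^{>N}\|_\infty \le A\bigl(\sum_{i>N}\lambda_i\bigr)^{1/2} \lesssim A\sqrt{\lambda_1}\,N^{1/2-\alpha}$, which (i) only converges for $\alpha>1/2$, whereas the proposition is stated for all $\alpha>0$, and (ii) forces the truncation level to scale as $N \asymp (A/\delta)^{2/(2\alpha-1)}$. Since the logarithm of your product cover is at least of order $N$, the assembly yields a leading polynomial factor $\delta^{-2/(2\alpha-1)}$, not $\delta^{-1/\alpha}$; at $\alpha=1$, for instance, this is $\delta^{-2}$ versus the claimed $\delta^{-1}$. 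Hence your concluding assertion that the construction ``produces a bound of order $\delta^{-1/\alpha}$'' does not follow from your own tail control. The paper instead asserts the sharper truncation estimate $\|f-g\|_\infty \le A\sqrt{\lambda_{m+1}} \lesssim A m^{-\alpha}$, which directly gives $m \asymp (A/\delta)^{1/\alpha}$ and hence the stated exponent; it is precisely this single-eigenvalue, rather than summed, tail bound that drives the $1/\alpha$ in the proposition, and it is the missing ingredient in your proposal.
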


Observe that $\mathbb{B}_\mathbb{H}$ is an ellipsoid in $L^2$, which is essentially ill-conditioned due to the eigenvalue decay condition. We shall show that it suffices to construct a covering for a finite-dimensional section of this ellipsoid corresponding to the large eigenvalues of the kernel $K$. In other words, at scale $\delta$ the ellipsoid $\mathbb{B}_\mathbb{H}$ ``looks'' finite-dimensional. The proof determines this critical dimension for a given $\delta>0$.

\begin{proof}
We may assume that the eigenvalues $\{\lambda_i\}$ are ordered as $\lambda_1 \geq \lambda _2 \geq \dots$. Fix an integer $m$ and define 
\begin{align*}\mathbb{B}_m =\left\{ g = \sum_{i=1}^mb_i \phi_i \Big| \sum_{i=1}^m \frac{b_i^2}{\lambda_i} \leq  1 \right\}. 
\end{align*}
Observe that for every $f \in \mathbb{B}_\mathbb{H}$ there exists $g \in \mathbb{B}_m$ such that $\|f-g\|_{L^\infty} \leq A \sqrt{\lambda_{m+1}}$.

 Observe now that for every $g \in \mathbb{B}_m$, $g = \sum_{i=1}^m b_i \phi_i(\cdot), b=(b_1,\dots,b_m)$  we have
\begin{align}
\label{eq:linftylone}
\| g \|_{L^\infty} = \left\| \sum_{i=1}^m b_i \phi_i(\cdot)  \right\|_{L^\infty} \leq A \| b \|_{l^1(\mathbb{R}^m)}.
\end{align}
Using this, we obtain a covering of $(\mathbb(B)_m, \|\cdot\|_\infty)$ by regarding it as a subset of $\mathbb{R}^m$. Namely, choose $N \in \mathbb{N}$ so that $\{b^1,\dots,b^N\}$ is an optimal $(\delta/A)$-covering of 
\begin{align*}
B_m = \left\{ b \in \mathbb{R}^m \Big|  \sum_{i=1}^m \frac{b_i^2}{\lambda_i} \leq  1\right\}
\end{align*}
in the metric of $l^1(\mathbb{R}^m)$ and extend it to a $\delta$-covering of $\mathbb{B}_m$ in supremum norm by introducing $\{ (b^1)^\top \phi(\cdot), \dots, (b^N)^\top \phi(\cdot) \}$ where $\phi(\cdot) = (\phi_1 (\cdot),\dots,\phi_m(\cdot))$ and using (\ref{eq:linftylone}). Now, the finite-dimensional norms are all equivalent and in particular we have that$\| \cdot \|_{l^1 (\mathbb{R}^m)} \leq \sqrt{m}\| \cdot \|_{l^2 (\mathbb{R}^m)}$. Hence, by rescaling apropriately, we require at most  $\left(1+\frac{2 A  \lambda_1\sqrt{m}}{\delta}\right)^m$ points to cover $B_m$ in $l^1$-metric, which we may thus take as an upper bound for $N$.

By hypothesis that $\lambda_j \lesssim j^{-2\alpha}$ it suffices to take $m \asymp (\delta/A)^{-1/\alpha}$ for the above covering to also cover the entirety of $\mathbb{B}_\mathbb{H}$ in $L^\infty$ since then every point of $\mathbb{B}$ is at most distance $\delta$ removed from a point of $\mathbb{B}_m$, which in turn is at most $\delta$ removed from the covering.   It follows that
\begin{align*}
\log \mathcal{N} (\mathbb{B}_\mathbb{H},  \|\cdot\|_{\infty},\delta) &\lesssim \log \left(1+\frac{A\lambda_1 \sqrt{m}}{\delta}\right)^{m} \\
&\lesssim{(\delta/A)^{-1/\alpha}}  \log \left(1+ \lambda_1 \frac{A^{1+1/2\alpha}}{\delta^{1+1/2\alpha}}\right)
\end{align*}
which we sought to prove.
\end{proof}

\subsection{An Experiment Supporting Example~\ref{ex:rkhs}}
\label{sec:experiments}

To empirically verify our claim regarding the rate of convergence of the LSE (\ref{df:PEM}) for Example~\ref{ex:rkhs} we simulate data from an autoregressive system (\ref{eq:ards}) with  $f_\star$ belonging to the RKHS with radial basis function kernel $K(x,z) = \exp\left(-\frac{1}{2}\|x-z\|_2^2\right)$. More precisely, we generate a random $f_\star$ with order $k = 10000$ and state dimension $d_x=5$ by first generating $\eta \in \mathbb{R}^{d_x \times k}$ with entries $\eta_{ij}$, drawn i.i.d. from a standard normal distribution and  $\Theta\in \mathbb{R}^{k \times d_x}$ with $\Theta_{ij}$ drawn i.i.d., also from a standard normal distribution. We then set $\bar{K}(\eta, \cdot) = (K(\eta)_i, \cdot)_{i=1}^k$, and $\tilde\Theta = \rho \Theta / |\Theta|$, for $\rho > 0$ a parameter used to control the Lipschitz constant of $f_\star$ and where $|\cdot|$ denotes the matrix operator norm. Finally, we  choose $f_\star(\cdot) = \tilde\Theta^\top \bar{K}(\eta, \cdot)$. Note that since $K(x,z)$ is $1$-Lipschitz in either argument, $f_\star$ is guaranteed to be $\rho$-contractive if $\rho<1$.

We then use $f_\star$ to generate training trajectories of varying length to be used in the LSE (\ref{df:PEM}), as well as use $f_\star$ to generate $500$ i.i.d. draws from the stationary distribution\footnote{Approximated by running the system for a burn in time of $T=1000$ time-steps before sampling from it.}. To implement the LSE (\ref{df:PEM}) we pass by the dual problem, kernel ridge regression, to estimate $\hat f$. We then approximate the $2$-norm distance $\|\hat f(\xi)-f_\star(\xi)\|_2$ by drawing $1000$ fresh trajectories of length and averaging over the final sample. We average our results over $10$ independent systems (random draws of $f_\star$) and plot our experiment in Figure~\ref{plot:niksplots}. It is interesting to note that the slope of the logarithmic plot is slightly less steep than $-1/2$. This is consistent with the near parametric rate of convergence suggested by Example~\ref{ex:rkhs} and the exponential eigenvalue decay of the kernel $K(x,z) = \exp\left(-\frac{1}{2}\|x-z\|_2^2\right)$, see \cite{wainwright2019high}, page 399.

\begin{figure}
\centering
\begin{minipage}{.5\textwidth}
  \centering
  \includegraphics[scale=0.5]{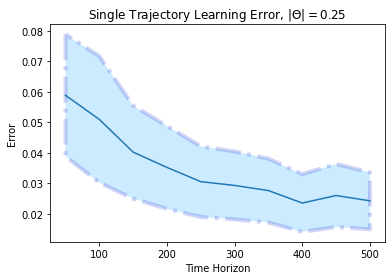}
  \includegraphics[scale=0.5]{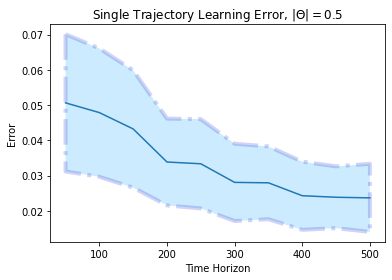}
  \includegraphics[scale=0.5]{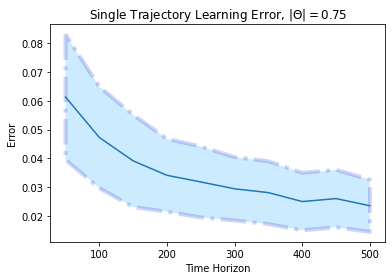}
  \label{fig:test1}
\end{minipage}%
\begin{minipage}{.5\textwidth}
  \centering
  \includegraphics[scale=0.5]{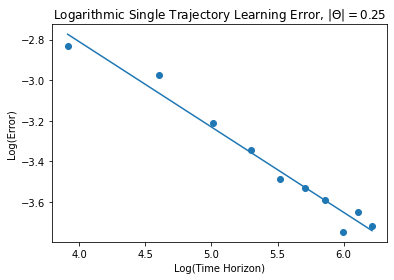}
  \includegraphics[scale=0.5]{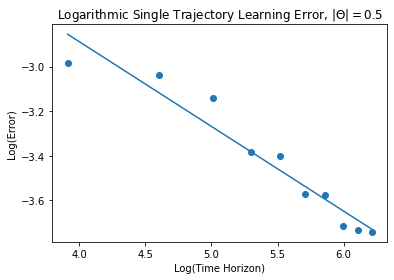}
  \includegraphics[scale=0.5]{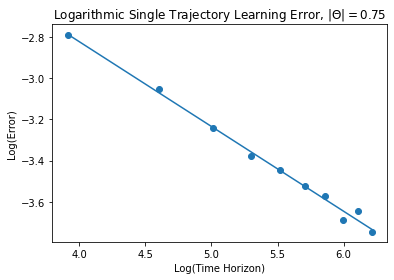}
  \label{fig:test2}
\end{minipage}
\caption{Convergence of the LSE (\ref{df:PEM}) in terms of the error $ \| \hat f(\xi) -f_\star(\xi)\|_2$ using data from a single trajectory and with a time horizon $T=500$. The plots on the left illustrate the convergence of the LSE with error bars and the right plots show this on a logarithmic scale. Notice that the slope of the line any of the rightmost plots is slightly less steep than $-1/2$.}
\label{plot:niksplots}
\end{figure}

\subsection{Proof of the claim in Example~\ref{ex:gls}}
\label{sec:ex:gls}
To use Theorem~\ref{thm:parametricrates} we need to bound the covering number of $\mathbb{F}^\phi$. Define
\begin{align*}
\mathbb{M}_{d_x,d_x} = \{A \in \mathbb{R}^{d_x\times d_x},  \| A \|_F \leq C \}.
\end{align*}
Then it is well-known that $\log \mathcal{N}(\mathbb{M}_{d_x,d_x},   \| \cdot \|_F , \delta) \leq  2  d_x \log (1+2C/\delta)$. Let now $\{A_1,\dots A_N\}$ be an optimal $\delta$-cover of $\mathbb{M}_{d_x,d_x}$. Then for every $A \in \mathbb{M}_{d_x,d_x}$  we can find $A_i \in \{A_1,\dots A_N\}$ such that
\begin{align*}
\| \phi (A \: \cdot \:) -\phi (A_i \: \cdot \:)\|_\infty \leq  \sup_{x\in X} \|(A - A_i)x\|_2 \leq  \|A - A_i\|_F \sup_{x\in X} \| x\|_2 \leq  \delta B.
\end{align*}
Hence any $\delta$-covering of $\mathbb{M}_{d_x,d_x}$ induces a $B\delta$-covering of $\mathbb{F}^\phi$ and we we have established the upper bound
\begin{equation*}
\begin{aligned}
 \log \mathcal{N}(\mathbb{F}^\phi, \|\cdot\|_\infty, \delta) &\leq \log \mathcal{N}(\mathbb{M}_{d_x,d_x},   \| \cdot \|_F , \delta/BC)\\
 &\leq  2  d_x^2 \log (1+2BC/\delta).
\end{aligned}
\end{equation*}
By Theorem~\ref{thm:parametricrates} we thus have
\begin{align*}
\label{eq:glscoreq}
\E\| \hat f(\xi) - f(\xi)\|_2 \lesssim \sqrt{\frac{\sigma_w^2}{T}  d_x^2 \log (1+2\sigma_w \sqrt{d_x} BCT^2)}+\sqrt{\sigma^2_T(\mathbb{F}^\phi)  d_x^2 \log (1+2BCT)}.
\end{align*}
so that the result follows by using Proposition~\ref{prop:cig} to bound $\sigma^2_T(\mathbb{F}^\phi, P_\star)$. \hfill $\blacksquare$

\bibliographystyle{plainnat}
\bibliography{main}

\begin{thebibliography}{44}
\providecommand{\natexlab}[1]{#1}
\providecommand{\url}[1]{\texttt{#1}}
\expandafter\ifx\csname urlstyle\endcsname\relax
  \providecommand{\doi}[1]{doi: #1}\else
  \providecommand{\doi}{doi: \begingroup \urlstyle{rm}\Url}\fi

\bibitem[Agarwal and Duchi(2012)]{agarwal2012generalization}
Alekh Agarwal and John~C Duchi.
\newblock The generalization ability of online algorithms for dependent data.
\newblock \emph{IEEE Transactions on Information Theory}, 59\penalty0
  (1):\penalty0 573--587, 2012.

\bibitem[Angeli(2002)]{angeli2002lyapunov}
David Angeli.
\newblock A {L}yapunov approach to incremental stability properties.
\newblock \emph{IEEE Transactions on Automatic Control}, 47\penalty0
  (3):\penalty0 410--421, 2002.

\bibitem[Baraud et~al.(2001)Baraud, Comte, and Viennet]{baraud2001adaptive}
Yannick Baraud, Fabienne Comte, and Gabrielle Viennet.
\newblock Adaptive estimation in autoregression or-mixing regression via model
  selection.
\newblock \emph{The Annals of Statistics}, 29\penalty0 (3):\penalty0 839--875,
  2001.

\bibitem[Boffi et~al.(2021)Boffi, Tu, and Slotine]{boffi2021regret}
Nicholas~M Boffi, Stephen Tu, and Jean-Jacques~E Slotine.
\newblock Regret bounds for adaptive nonlinear control.
\newblock In \emph{Learning for Dynamics and Control}, pages 471--483. PMLR,
  2021.

\bibitem[Bu et~al.(2020)Bu, Zou, and Veeravalli]{bu2020tightening}
Yuheng Bu, Shaofeng Zou, and Venugopal~V Veeravalli.
\newblock Tightening mutual information-based bounds on generalization error.
\newblock \emph{IEEE Journal on Selected Areas in Information Theory},
  1\penalty0 (1):\penalty0 121--130, 2020.

\bibitem[Dean et~al.(2020)Dean, Mania, Matni, Recht, and Tu]{dean2020sample}
Sarah Dean, Horia Mania, Nikolai Matni, Benjamin Recht, and Stephen Tu.
\newblock On the sample complexity of the linear quadratic regulator.
\newblock \emph{Foundations of Computational Mathematics}, 20\penalty0
  (4):\penalty0 633--679, 2020.

\bibitem[Donsker and Varadhan(1975)]{donsker1975asymptotic}
Monroe~D Donsker and SR~Srinivasa Varadhan.
\newblock Asymptotic evaluation of certain markov process expectations for
  large time, i.
\newblock \emph{Communications on Pure and Applied Mathematics}, 28\penalty0
  (1):\penalty0 1--47, 1975.

\bibitem[Foster et~al.(2020)Foster, Sarkar, and Rakhlin]{foster2020learning}
Dylan Foster, Tuhin Sarkar, and Alexander Rakhlin.
\newblock Learning nonlinear dynamical systems from a single trajectory.
\newblock In \emph{Learning for Dynamics and Control}, pages 851--861. PMLR,
  2020.

\bibitem[G{\'a}lvez et~al.(2021)G{\'a}lvez, Bassi, Thobaben, and
  Skoglund]{galvez2021tighter}
Borja~Rodr{\'\i}guez G{\'a}lvez, Germ{\'a}n Bassi, Ragnar Thobaben, and Mikael
  Skoglund.
\newblock Tighter expected generalization error bounds via wasserstein
  distance.
\newblock In \emph{Advances in Neural Information Processing Systems}, 2021.

\bibitem[Grünwald et~al.(2021)Grünwald, Steinke, and
  Zakynthinou]{grunwald2021pac}
Peter Grünwald, Thomas Steinke, and Lydia Zakynthinou.
\newblock Pac-bayes, mac-bayes and conditional mutual information: Fast rate
  bounds that handle general vc classes.
\newblock In \emph{Conference on Learning Theory}, pages 2217--2247. PMLR,
  2021.

\bibitem[Hellstr{\"o}m and Durisi(2020)]{hellstrom2020generalization}
Fredrik Hellstr{\"o}m and Giuseppe Durisi.
\newblock Generalization bounds via information density and conditional
  information density.
\newblock \emph{IEEE Journal on Selected Areas in Information Theory},
  1\penalty0 (3):\penalty0 824--839, 2020.

\bibitem[Hoeffding(1963)]{hoeffding1963probability}
Wassily Hoeffding.
\newblock Probability inequalities for sums of bounded random variables.
\newblock \emph{Journal of the American Statistical Association}, 58\penalty0
  (301):\penalty0 13--30, 1963.

\bibitem[Jain et~al.(2021)Jain, Kowshik, Nagaraj, and Netrapalli]{jain2021near}
Prateek Jain, Suhas~S Kowshik, Dheeraj Nagaraj, and Praneeth Netrapalli.
\newblock Near-optimal offline and streaming algorithms for learning non-linear
  dynamical systems.
\newblock \emph{arXiv preprint arXiv:2105.11558}, 2021.

\bibitem[Jedra and Proutiere(2020)]{jedra2020finite}
Yassir Jedra and Alexandre Proutiere.
\newblock Finite-time identification of stable linear systems optimality of the
  least-squares estimator.
\newblock In \emph{2020 59th IEEE Conference on Decision and Control (CDC)},
  pages 996--1001. IEEE, 2020.

\bibitem[Kakade et~al.(2020)Kakade, Krishnamurthy, Lowrey, Ohnishi, and
  Sun]{kakade2020information}
Sham Kakade, Akshay Krishnamurthy, Kendall Lowrey, Motoya Ohnishi, and Wen Sun.
\newblock Information theoretic regret bounds for online nonlinear control.
\newblock \emph{Advances in Neural Information Processing Systems},
  33:\penalty0 15312--15325, 2020.

\bibitem[Kolmogorov and Tikhomirov(1961)]{kolmogorov1961entropy}
Andrei~N Kolmogorov and Vladimir~M Tikhomirov.
\newblock $\epsilon$-entropy and $\epsilon$-capacity of sets in functional
  spaces.
\newblock \emph{Amer. Math. Soc. Transl.(Ser. 2)}, 17:\penalty0 277--364, 1961.

\bibitem[Kuznetsov and Mohri(2017)]{kuznetsov2017generalization}
Vitaly Kuznetsov and Mehryar Mohri.
\newblock Generalization bounds for non-stationary mixing processes.
\newblock \emph{Machine Learning}, 106\penalty0 (1):\penalty0 93--117, 2017.

\bibitem[Lai and Wei(1982)]{lai1982least}
Tze~Leung Lai and Ching~Zong Wei.
\newblock Least squares estimates in stochastic regression models with
  applications to identification and control of dynamic systems.
\newblock \emph{The Annals of Statistics}, 10\penalty0 (1):\penalty0 154--166,
  1982.

\bibitem[Lale et~al.(2021)Lale, Azizzadenesheli, Hassibi, and
  Anandkumar]{lale2021model}
Sahin Lale, Kamyar Azizzadenesheli, Babak Hassibi, and Anima Anandkumar.
\newblock Model learning predictive control in nonlinear dynamical systems.
\newblock In \emph{2021 60th IEEE Conference on Decision and Control (CDC)},
  pages 757--762. IEEE, 2021.

\bibitem[Le~Cam(2012)]{le2012asymptotic}
Lucien Le~Cam.
\newblock \emph{Asymptotic methods in statistical decision theory}.
\newblock Springer Science \& Business Media, 2012.

\bibitem[Liang et~al.(2015)Liang, Rakhlin, and Sridharan]{liang2015learning}
Tengyuan Liang, Alexander Rakhlin, and Karthik Sridharan.
\newblock Learning with square loss: Localization through offset rademacher
  complexity.
\newblock In \emph{Conference on Learning Theory}, pages 1260--1285. PMLR,
  2015.

\bibitem[Ljung(1999)]{lennart1999system}
Lennart Ljung.
\newblock System identification: theory for the user.
\newblock \emph{PTR Prentice Hall, Upper Saddle River, NJ}, 28, 1999.

\bibitem[Mania et~al.(2019)Mania, Tu, and Recht]{mania2019certainty}
Horia Mania, Stephen Tu, and Benjamin Recht.
\newblock Certainty equivalence is efficient for linear quadratic control.
\newblock \emph{arXiv preprint arXiv:1902.07826}, 2019.

\bibitem[Mania et~al.(2020)Mania, Jordan, and Recht]{mania2020active}
Horia Mania, Michael~I Jordan, and Benjamin Recht.
\newblock Active learning for nonlinear system identification with guarantees.
\newblock \emph{arXiv preprint arXiv:2006.10277}, 2020.

\bibitem[Mann and Wald(1943)]{mann1943statistical}
Henry~B Mann and Abraham Wald.
\newblock On the statistical treatment of linear stochastic difference
  equations.
\newblock \emph{Econometrica, Journal of the Econometric Society}, pages
  173--220, 1943.

\bibitem[Neu et~al.(2021)Neu, Dziugaite, Haghifam, and Roy]{pmlr-v134-neu21a}
Gergely Neu, Gintare~Karolina Dziugaite, Mahdi Haghifam, and Daniel~M. Roy.
\newblock Information-theoretic generalization bounds for stochastic gradient
  descent.
\newblock In Mikhail Belkin and Samory Kpotufe, editors, \emph{Proceedings of
  Thirty Fourth Conference on Learning Theory}, volume 134 of \emph{Proceedings
  of Machine Learning Research}, pages 3526--3545. PMLR, 15--19 Aug 2021.
\newblock URL \url{https://proceedings.mlr.press/v134/neu21a.html}.

\bibitem[Oymak and Ozay(2019)]{oymak2019non}
Samet Oymak and Necmiye Ozay.
\newblock Non-asymptotic identification of lti systems from a single
  trajectory.
\newblock In \emph{2019 American control conference (ACC)}, pages 5655--5661.
  IEEE, 2019.

\bibitem[Paulin(2015)]{paulin2015concentration}
Daniel Paulin.
\newblock Concentration inequalities for markov chains by marton couplings and
  spectral methods.
\newblock \emph{Electronic Journal of Probability}, 20:\penalty0 1--32, 2015.

\bibitem[Rakhlin and Sridharan(2014)]{rakhlin2014online}
Alexander Rakhlin and Karthik Sridharan.
\newblock Online non-parametric regression.
\newblock In \emph{Conference on Learning Theory}, pages 1232--1264. PMLR,
  2014.

\bibitem[Russo and Zou(2019)]{russo2019much}
Daniel Russo and James Zou.
\newblock How much does your data exploration overfit? controlling bias via
  information usage.
\newblock \emph{IEEE Transactions on Information Theory}, 66\penalty0
  (1):\penalty0 302--323, 2019.

\bibitem[Sarkar and Rakhlin(2019)]{sarkar2019near}
Tuhin Sarkar and Alexander Rakhlin.
\newblock Near optimal finite time identification of arbitrary linear dynamical
  systems.
\newblock In \emph{International Conference on Machine Learning}, pages
  5610--5618. PMLR, 2019.

\bibitem[Sattar and Oymak(2020)]{sattar2020non}
Yahya Sattar and Samet Oymak.
\newblock Non-asymptotic and accurate learning of nonlinear dynamical systems.
\newblock \emph{arXiv preprint arXiv:2002.08538}, 2020.

\bibitem[Simchowitz and Foster(2020)]{simchowitz2020naive}
Max Simchowitz and Dylan Foster.
\newblock Naive exploration is optimal for online lqr.
\newblock In \emph{International Conference on Machine Learning}, pages
  8937--8948. PMLR, 2020.

\bibitem[Simchowitz et~al.(2018)Simchowitz, Mania, Tu, Jordan, and
  Recht]{simchowitz2018learning}
Max Simchowitz, Horia Mania, Stephen Tu, Michael~I Jordan, and Benjamin Recht.
\newblock Learning without mixing: Towards a sharp analysis of linear system
  identification.
\newblock In \emph{Conference On Learning Theory}, pages 439--473. PMLR, 2018.

\bibitem[Steinke and Zakynthinou(2020)]{steinke2020reasoning}
Thomas Steinke and Lydia Zakynthinou.
\newblock Reasoning about generalization via conditional mutual information.
\newblock In \emph{Conference on Learning Theory}, pages 3437--3452. PMLR,
  2020.

\bibitem[Tsiamis and Pappas(2019)]{tsiamis2019finite}
Anastasios Tsiamis and George~J Pappas.
\newblock Finite sample analysis of stochastic system identification.
\newblock In \emph{2019 IEEE 58th Conference on Decision and Control (CDC)},
  pages 3648--3654. IEEE, 2019.

\bibitem[Tsybakov(2009)]{tsybakov2009introduction}
Alexandre~B Tsybakov.
\newblock \emph{Introduction to Nonparametric Estimation}.
\newblock Springer, 2009.

\bibitem[Tu et~al.(2021)Tu, Robey, Zhang, and Matni]{tu2021sample}
Stephen Tu, Alexander Robey, Tingnan Zhang, and Nikolai Matni.
\newblock On the sample complexity of stability constrained imitation learning.
\newblock \emph{arXiv preprint arXiv:2102.09161}, 2021.

\bibitem[Van~der Vaart(2000)]{van2000asymptotic}
Aad~W Van~der Vaart.
\newblock \emph{Asymptotic statistics}, volume~3.
\newblock Cambridge university press, 2000.

\bibitem[Vershynin(2018)]{vershynin2018high}
Roman Vershynin.
\newblock \emph{High-dimensional probability: An introduction with applications
  in data science}, volume~47.
\newblock Cambridge university press, 2018.

\bibitem[Vidyasagar and Karandikar(2006)]{vidyasagar2006learning}
Mathukumalli Vidyasagar and Rajeeva~L Karandikar.
\newblock A learning theory approach to system identification and stochastic
  adaptive control.
\newblock In \emph{Probabilistic and randomized methods for design under
  uncertainty}, pages 265--302. Springer, 2006.

\bibitem[Wainwright(2019)]{wainwright2019high}
Martin~J Wainwright.
\newblock \emph{High-dimensional statistics: A non-asymptotic viewpoint},
  volume~48.
\newblock Cambridge University Press, 2019.

\bibitem[Xu and Raginsky(2017)]{NIPS2017_ad71c82b}
Aolin Xu and Maxim Raginsky.
\newblock Information-theoretic analysis of generalization capability of
  learning algorithms.
\newblock In \emph{Advances in Neural Information Processing Systems},
  volume~30, 2017.

\bibitem[Ziemann and Sandberg(2022)]{ziemann2022regret}
Ingvar Ziemann and Henrik Sandberg.
\newblock Regret lower bounds for learning linear quadratic gaussian systems.
\newblock \emph{arXiv preprint arXiv:2201.01680}, 2022.

\end{thebibliography}

\end{document}